\documentclass[twoside,11pt]{article}

\usepackage[english]{babel}
\usepackage[utf8]{inputenc}
\usepackage[T1]{fontenc}
\usepackage[a4paper,left=2.5cm,right=2.5cm,top=2cm,bottom=3cm]{geometry} 
\usepackage{tikz}
\usetikzlibrary[topaths]

\usepackage{subfig}

\usepackage{enumitem} 
\usepackage{tabto} 
\usepackage{lipsum}

\usepackage{wrapfig}

\usepackage{changepage}

\usepackage{soul}  

\usepackage{cite}

\usepackage{microtype}
\usepackage{graphicx}

\usepackage{algorithm}
\usepackage{algcompatible} 

\usepackage{amsmath, mathrsfs, amssymb, amsthm, mathtools} 

\usepackage{color}

\usepackage{changepage}

\newtheorem{theorem}{Theorem}
 
\newtheorem{definition}{Definition}
\newtheorem{lemma}{Lemma}
\newtheorem{remark}{Remark}

\newtheorem{proposition}{Proposition}
\newtheorem{assumption}{Assumption}

\newtheorem{observation}{Observation} 

\newtheorem*{theorem*}{Theorem}
\newtheorem*{example*}{Example} 
\newtheorem*{definition*}{Definition}
\newtheorem*{lemma*}{Lemma}
\newtheorem*{remark*}{Remark}
\newtheorem*{corollary*}{Corollary}
\newtheorem*{proposition*}{Proposition}
\newtheorem*{assumption*}{Assumption}
\newtheorem*{claim*}{Claim}

\def\submission{0}


\DeclareMathAlphabet{\mathpzc}{OT1}{pzc}{m}{it}

\makeatletter
\newcommand{\oset}[3][0ex]{%
  \mathrel{\mathop{#3}\limits^{
    \vbox to#1{\kern1\ex@
    \hbox{$\scriptstyle#2$}\vss}}}}
\makeatother

\newcommand{\wh}{\widehat}

\newcommand{\wt}[1]{\widetilde{#1}}

\renewcommand{\Pr}{ \mathbb{P} }

\newcommand{\A}{ \mathcal{A} }

\renewcommand{\[}{ \left[ }
\renewcommand{\]}{ \right] }

\renewcommand{\(}{ \left( }
\renewcommand{\)}{ \right) }

\renewcommand{\O}{ \mathcal{O} }

\newcommand{\X}{ \mathcal{X} }

\renewcommand{\O}{ \mathcal{O} }

\renewcommand{\[}{ \left[ }
\renewcommand{\]}{ \right] } 

\newcommand{\R}{ \mathbb{R} } 

\newcommand{\D}{ \mathcal{D} }

\begin{document}

\title{From Random Search to Bandit Learning in Metric Measure Spaces} 

\author{Chuying Han\footnote{19300180141@fudan.edu.cn} \quad Yasong Feng\footnote{ysfeng20@fudan.edu.cn} \quad Tianyu Wang\footnote{wangtianyu@fudan.edu.cn}}

\maketitle

\begin{abstract} 
    Random Search is one of the most widely-used method for Hyperparameter Optimization, and is critical to the success of deep learning models. Despite its astonishing performance, little non-heuristic theory has been developed to describe the underlying working mechanism. This paper gives a theoretical accounting of Random Search. We introduce the concept of \emph{scattering dimension} that describes the landscape of the underlying function, and quantifies the performance of random search. We show that, when the environment is noise-free, the output of random search converges to the optimal value in probability at rate $ \widetilde{\mathcal{O}} \left( \left( \frac{1}{T} \right)^{ \frac{1}{d_s} } \right) $, where $ d_s \ge 0 $ is the scattering dimension of the underlying function. When the observed function values are corrupted by bounded $iid$ noise, the output of random search converges to the optimal value in probability at rate $ \widetilde{\mathcal{O}} \left( \left( \frac{1}{T} \right)^{ \frac{1}{d_s + 1} } \right) $. In addition, based on the principles of random search, we introduce an algorithm, called BLiN-MOS, for Lipschitz bandits in doubling metric spaces that are also endowed with a probability measure, and show that under mild conditions, BLiN-MOS achieves a regret rate of order $ \widetilde{\mathcal{O}} \left( T^{ \frac{d_z}{d_z + 1} } \right) $, where $d_z$ is the zooming dimension of the problem instance. 
\end{abstract}

\section{Introduction}

Random Search \cite{bergstra2011algorithms,bergstra2012random} is one of the most widely-used method for HyperParameter Optimization (HPO) in training neural networks. 
Despite its astonishing performance, little non-heuristic theory has been developed to describe the underlying working mechanism. HPO problems can be formulated as a zeroth-order optimization problem. More specifically, for HPO problem, we seek to solve 
\begin{align*} 
    \min_{x \in S } f (x) , 
\end{align*} 
where $S$ is the feasible set, $f$ is the objective function, and only zeroth-order information of $f$ is available. For HPO tasks, $f$ is typically nonsmooth nonconvex.

To quantify the performance of random search, we introduce a new concept called \emph{scattering dimension} that describes the landscape of the objective function $f$. In general, random search performs well on functions with small scattering dimension. More specifically, we show that, in noise-free environments, the optimality gap of random search with $T$ random trials converges to zero in probability at rate $\wt{\mathcal{O}} \( \( \frac{1}{T} \)^{\frac{1}{d_s}} \) $, where $ d_s$ is the scattering dimension of the underlying function. When the observed function values are corrupted by bounded $iid$ noise, the output of random search converges to the optimal value in probability at rate $ \wt{\mathcal{O}} \( \( \frac{1}{T} \)^{ \frac{1}{d_s + 1} } \) $. 



The scattering dimension relates to the classic concept of zooming dimension \cite{kleinberg2008multi,bubeck2008tree}. We illustrate that, for polynomials with the stationary point in the domain, the sum of the scattering dimension and the zooming dimension equals the ambient dimension of the space. Since sum of polymonials approximate smooth functions, and maximum/minimum of polynomaials can create nonsmoothness, this result shall hold true for a larger class of functions. The scattering dimension requires an additional probability measure to be endowed on the space --- the zooming dimension is well-defined over compact doubling metric spaces, whereas the scattering dimension requires there to be an additional probability measure over this space. 


Also, we introduce a Lipschitz bandit algorithm called Batched Lipschitz Narrowing with Maximum Order Statistics (BLiN-MOS). When the reward samples are corrupted by bounded and positively supported noise, BLiN-MOS achieves a regret of order $ \wt{\mathcal{O}} \( T^{\frac{d_z}{d_z+1}} \) $ in metric spaces with a probability measure, where $d_z$ is the zooming dimension \cite{kleinberg2008multi,bubeck2008tree} of the problem instance. Also, $\mathcal{O} (\log \log T)$ rounds of communications are sufficient for achieving this regret rounds. 

In summary, our main contributions are 
\begin{enumerate}[leftmargin=*] 
    \item We provide the first non-heuristic analysis of the random search algorithm, which is widely used in HPO. 
    \item We introduce the concept of scattering dimension that describes the landscape of the underlying function. In addition, scattering dimension quantifies the convergence rates of random search. 
    \item We introduce a new Lipschitz bandit algorithm called BLiN-MOS. Under mild conditions, BLiN-MOS achieves a regret rate of order $ \wt{\mathcal{O}} \( T^{ \frac{d_z}{d_z+1} } \) $ in metric spaces endowed with a probability measure, where $d_z$ is the zooming dimension. In addition, only $ \mathcal{O} \(\log \log T\) $ rounds of communications are needed to achieve this rate. 
\end{enumerate}

\section{Related Works} 

The spirit of random search could probably date back to classical times when searching became necessary for scientific discoveries or engineering designs. In modern machine learning, the principles and procedures of random search were first formally studied by \cite{bergstra2011algorithms,bergstra2012random}, motivated by the surging need of hyperparameter tuning in neural network training. 
Ever since the proposal of random search for HPO in machine learning, it has been a standard benchmark algorithm for subsequent works focusing on HPO; See (e.g., \cite{feurer2019hyperparameter}) for an exposition. Despite ubiquitousness of random search, little non-heuristic theory has been developed to analyze its performance. Perhaps the most related reasoning of the astonishing performance of random search comes from the Bayesian optimization community \cite{wang2016bayesian}. In their paper \cite{wang2016bayesian}, Wang et al. wrote 
\begin{quote} 
    \emph{``[T]he rationale [behind random search's performance] [is] that points sampled uniformly at random in each dimension can densely cover each low-dimensional subspace. As such, random search can exploit low effective dimensionality without knowing which dimensions are important.''} 
\end{quote} 

Other than this comment, no reasoning for the working mechanism of random search is known. In this paper, we fill this gap by introducing the concept of \emph{scattering dimension} that describes the landscape of the objective function. Using this language, we can precisely quantify the performance of random search. In addition, we design BLiN-MOS that extends random search using recent advancements in Lipschitz bandits. 

Lipschitz bandit problems have been prosperous since its proposal under the name of ``Continuum-armed bandit'' \cite{agrawal1995continuum}. Throughout the years, various researchers have contributed to this field (e.g., \cite{kleinberg2005nearly,kleinberg2008multi,bubeck2008tree,bubeck2011x,magureanu2014lipschitz,christina2019nonparametric,krishnamurthy2019contextual,lu2019optimal,fenglipschitz}). To name a few relatively recent results, \cite{magureanu2014lipschitz} derived a concentration inequality for discrete Lipschitz bandits; The idea of robust mean estimators \cite{bickel1965some, alon1999space, 10.1214/11-AIHP454,bubeck2013bandits} was applied to the Lipschitz bandit problem to cope with heavy-tail rewards \cite{lu2019optimal}. \cite{pmlr-v134-podimata21a} considered the adversarial Lipschitz bandit problem, and introduced an exponential-weights \cite{LITTLESTONE1994212,auer2002nonstochastic,arora2012multiplicative} algorithm that adaptively learns the overall function landspace. 

Perhaps the most important works in modern Lipschitz bandit literature are \cite{kleinberg2008multi,bubeck2008tree}. In \cite{kleinberg2008multi,bubeck2008tree}, the concept of zooming dimension was introduced, and algorithms that near-optimally solve the Lipschitz bandit problem were introduced. In particular, the optimal algorithm achieves a regret rate of order  $ \wt{\mathcal{O}} \( T^{\frac{d_z + 1}{d_z + 2}} \) $ in (compact doubling) metric spaces, where $d_z$ is the zooming dimension. In addition, matching lower bounds in metric spaces are proved. In this paper, Lipschitz bandit problems in metric measure spaces are considered. In particular, we propose an algorithm, called BLiN-MOS, that achieves a regret rate of order $ \wt{\mathcal{O}} \( T^{\frac{d_z}{d_z+1}} \) $ in metric spaces with a probability measure. Our results show that, under certain conditions, the previous $ \wt{\mathcal{O}} (T^{\frac{d_z+1}{d_z+2}}) $ regret rate can be improved. In addition, we 
show that $\mathcal{O} (\log \log T)$ rounds of communications are sufficient for BLiN-MOS to achieve regret of order $ \wt{\mathcal{O}} ( T^{\frac{d_z}{d_z+1}} ) $ in metric measure spaces. 

\noindent \textbf{Paper Organization.} The rest of this paper is organized as follows. In Section \ref{sec:rs}, we introduce the concept of scattering dimension and use it to characterize the performance of the random search algorithm. Section \ref{sec:zooming} is dedicated to basic properties of scattering dimension. In Section \ref{sec:mos}, we introduce an algorithm for stochastic continumm-armed bandit.

\section{Understanding Random Search via the Scattering Dimension} 

\label{sec:rs}


Scattering dimension of a function $f$, as the name implies, describes how likely a randomly scattered point in the domain hits a near-optimal point. 
Consider a compact metric measure space $(\X, \D, \nu)$ where $\nu$ is a probability measure defined over the Borel $\sigma$-algebra of $\D$. 
For any set $S \subseteq \X$, define $ f_S^{\max} = \sup_{x \in S} f (x)$ and $f_S^{\min} = \inf_{x \in S} f (x)$. For any closed subset $ Z \subseteq \X  $, let $\mathcal{F}_Z$ be the Borel $\sigma$-algebra on $ Z $ (with respect to $\D$). Let $\Pr$ be the probability law defined with respect to $\nu$, and let $ X_Z $ be the random variable such that $ \Pr \( X_Z \in E \) = \Pr \( E \) $ for any $E \in \mathcal{F}_Z $. 


With the above notations, we formally define scattering dimension below. 


\begin{definition} 
    \label{def:scattering} 
    Let $ ( \X, \D, \Pr ) $ be a compact metric measure space, where $\Pr$ is a probability measure. 
    For a Lipschitz function $f : \X \to \R$ whose maximum is obtained at $x^* \in \X$, define the scattering dimension $ d_s $ of $f$ as  
    \begin{align*} 
        d_s := \inf \{ \tilde{d} \ge 0 :&\; \exists \kappa \in (0,1], \text{ such that } \Pr \( f ( X_B ) < f_B^{\max} -  \alpha ( f_B^{\max} - f_B^{\min} ) \) 
        \le 
        1 - \kappa \alpha^{ \tilde{d} }, \\ 
        &\; \forall \text{ closed ball } B \subseteq \X \text{ with $x^* \in B$},\;  \forall \alpha \in (0, {1} ] \} . 
    \end{align*} 

    In addition, we define the scattering constant $\kappa_s$ to be 
    \begin{align*}
        \kappa_s := \max \{ \kappa \in (0,1] :&\;  \Pr \( f ( X_B ) < f_B^{\max} -  \alpha ( f_B^{\max} - f_B^{\min} ) \)  
        \le 
        1 - \kappa \alpha^{d_s}, \\ 
        &\; \forall \text{ closed ball } B \subseteq \X \text{ with $x^* \in B$},\;  \forall \alpha \in (0, {1} ] \} . 
    \end{align*}
\end{definition} 


\begin{remark}
    Throughout the rest of the paper, we consider only Lipschitz functions for which the scattering dimension is defined with a positive scattering constant $\kappa_s$. 
\end{remark}

\subsection{Important Special Cases}  

We consider a special case $ ([0,1]^d, \| \cdot \|_\infty, \nu) $, where $\nu$ is the Lebesgue measure over $ \[ 0,1 \]^d $. For algorithmic purpose, the space $ ([0,1]^d, \| \cdot \|_\infty, \nu) $ is of great importance, since random search is usually implemented using uniformly random samples over a cube. In this case, the scattering dimension is defined as follows. 

\begin{definition} 
    \label{def:scattering-simple} 
    Consider the space $ ([0,1]^d, \| \cdot \|, \nu) $ where $ \nu $ is the Lebesgue measure over $ \[ 0,1 \]^d $. Let $\Pr$ be the probability measure defined with respect to $\nu$. 
    For a Lipschitz function $f : [0,1]^d \to \R$ whose maximum is obtained at $x^* \in [0,1]^d $, define the scattering dimension $ d_s $ of $f$ as  
    \begin{align*} 
        d_s := \inf \{ \tilde{d} \ge 0 :&\; \exists \kappa \in (0,1], \text{ such that } \Pr \( f ( X_q ) < f_q^{\max} -  \alpha ( f_q^{\max} - f_q^{\min} ) \) 
        \le 
        1 - \kappa \alpha^{ \tilde{d} }, \\ 
        &\; \forall \text{ closed cube } q \subseteq [0,1]^d \text{ with $x^* \in q$},\;  \forall \alpha \in (0,1] \} . 
    \end{align*} 

    In addition, we define the scattering constant $\kappa_s$ to be 
    \begin{align*}
        \kappa_s := \max \{ \kappa \in (0,1] :&\;  \Pr \( f ( X_q ) < f_q^{\max} -  \alpha ( f_q^{\max} - f_q^{\min} ) \)  
        \le 
        1 - \kappa \alpha^{d_s}, \\ 
        &\; \forall \text{ closed cube } q \subseteq \X \text{ with $x^* \in q$},\;  \forall \alpha \in (0,1] \} . 
    \end{align*} 
\end{definition}


\subsubsection{Scattering Dimension of Norm Polynomials} 
\label{sec:scatter-gp}

To illustrate how the scattering dimension describes the function landscape, we consider the following function $g_p$. 
For $p\ge 1$, let $g_p(x) = 1 - \frac{1}{p} \| x \|_\infty^p$ be a function defined over $[0,1]^d$ for some $d \ge 1$. Figure \ref{fig:demo} illustrates $g_p $ with $p=1,3,5,10$ and $d = 1$. We have the following proposition that illustrates how scattering dimension describes the function landscape. The proof of Proposition \ref{prop:gp} can be found in the Appendix.

\begin{proposition} 
    \label{prop:gp} 
    Let $p \ge 1$, and let $ g_p (x) : [0,1]^d \to \R$ be defined as $ g_p (x) = 1 - \frac{1}{p} \| x \|_\infty^p $. The scattering dimension of $g_p$ is $d_s = \frac{d}{p}$ and the scattering constant of $g_p$ is $\kappa_s = 1$. 
\end{proposition}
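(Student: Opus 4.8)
The plan is to compute the probability appearing in Definition~\ref{def:scattering-simple} \emph{exactly} for $f=g_p$; one exact formula will then pin down both $d_s$ and $\kappa_s$ at once. First I would note that $g_p(x)=1-\tfrac1p\|x\|_\infty^p$ is $1$-Lipschitz (since $t\mapsto \tfrac1p t^p$ is $1$-Lipschitz on $[0,1]$ and $\|\cdot\|_\infty$ is $1$-Lipschitz) and is strictly decreasing in $\|x\|_\infty$, so its maximum over $[0,1]^d$ is attained at $x^*=0$ with $g_p(0)=1$. Next I would describe the admissible cubes: a closed (axis-aligned) cube $q\subseteq[0,1]^d$ with $0\in q$ has every side equal to an interval of a common length $s$ lying in $[0,1]$ and containing $0$, hence each side is $[0,s]$, i.e.\ $q=[0,s]^d$ for some $s\in(0,1]$. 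On such a $q$ we get $f_q^{\max}=g_p(0)=1$ and $f_q^{\min}=g_p$ evaluated at a corner with $\|x\|_\infty=s$, so $f_q^{\min}=1-\tfrac1p s^p$ and $f_q^{\max}-f_q^{\min}=\tfrac1p s^p$.

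Then I would rewrite the defining event. For $\alpha\in(0,1]$,
\begin{align*}
  g_p(X_q) < f_q^{\max}-\alpha\,(f_q^{\max}-f_q^{\min})
  \iff 1-\tfrac1p\|X_q\|_\infty^p < 1-\tfrac{\alpha}{p}s^p
  \iff \|X_q\|_\infty > \alpha^{1/p}s .
\end{align*}
Since $X_q$ is uniform on $q=[0,s]^d$ and $\alpha^{1/p}s\le s$ (because $\alpha\le1$ and $p\ge1$), the complementary event $\{\|X_q\|_\infty\le\alpha^{1/p}s\}=\{X_q\in[0,\alpha^{1/p}s]^d\}$ has probability $(\alpha^{1/p}s/s)^d=\alpha^{d/p}$. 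Hence, for every admissible $q$ and every $\alpha\in(0,1]$,
\begin{align*}
  \Pr\!\left( g_p(X_q) < f_q^{\max}-\alpha\,(f_q^{\max}-f_q^{\min}) \right) = 1-\alpha^{d/p}.
\end{align*}

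Finally I would read off the dimension and constant from this exact identity. It shows that $\tilde d=d/p$ together with $\kappa=1$ meets the inequality of Definition~\ref{def:scattering-simple} (with equality), so $d_s\le d/p$, and since the definition restricts $\kappa$ to $(0,1]$ this makes the attained value $\kappa_s=1$. For the converse $d_s\ge d/p$, I would argue that if $\tilde d<d/p$ then any feasible $\kappa$ must satisfy $1-\alpha^{d/p}\le 1-\kappa\alpha^{\tilde d}$, i.e.\ $\kappa\le\alpha^{(d/p)-\tilde d}$, for all $\alpha\in(0,1]$; letting $\alpha\to0^+$ forces $\kappa\le0$, a contradiction. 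Therefore $d_s=d/p$ and $\kappa_s=1$.

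I do not expect a genuine obstacle. The only point needing care is the first step — verifying that the closed cubes containing $x^*$ are \emph{exactly} the one-parameter family $\{[0,s]^d: s\in(0,1]\}$, so that the supremum over cubes in the definition is controlled by a single parameter — and then exploiting the fact that the probability is computed \emph{exactly} (not merely bounded) to extract simultaneously the infimum defining $d_s$ and the maximum defining $\kappa_s$.
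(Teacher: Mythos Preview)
Your proposal is correct and follows essentially the same route as the paper's own proof: identify the cubes containing $x^*=0$ as $[0,s]^d$, compute $f_q^{\max}$ and $f_q^{\min}$, rewrite the event in terms of $\|X_q\|_\infty$, and use independence of coordinates to get the exact probability $1-\alpha^{d/p}$. Your version is slightly more thorough, as you justify why the admissible cubes must have the form $[0,s]^d$ and you argue the lower bound $d_s\ge d/p$ explicitly, both of which the paper's proof leaves implicit.
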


In the space $ ([0,1]^d, \| \cdot \|_\infty, \nu ) $, the scattering dimension of many important functions can be explicitly calculated. 


    

\subsection{The Random Search Algorithm} 

The random search algorithm is a classic and concise algorithm. Its procedure is summarized in Algorithm \ref{alg:rs}. 

\begin{algorithm}[H]
    \caption{Random Search} 
    \label{alg:rs} 
    \begin{algorithmic}[1]  
        \STATE \textbf{Input.} The space $ (  \X, \D , \nu ) $. Zeroth-order oracle to the unknown function $f : \X \to \R$. Total number of trials $T$. 
        \STATE Randomly select $T$ points $ \{X_i\}_{i=1}^T \subseteq \X$, 
        where each $X_i$ is governed by the law of $ \nu $.
        \STATE \textbf{Output} $ Y_T^{\max} = \max \{ f (X_1), f (X_2), \cdots, f (X_T) \} $. 
    \end{algorithmic} 
\end{algorithm} 

The performance of Random Search can be quantified by the following theorem, whose proof can be found in the Appendix. 
\begin{theorem} 
    \label{thm:rs}
    Consider a compact metric measure space $(\X, \D, \mu)$. Let $\Pr$ be the probability measure defined with respect to $\mu$. Let $f : \X \to \R$ be a Lipschitz function with Lipschitz constant $L$. Let $f^*$ be the maximum value of $f$ in $\X$. Then for any $ T \in \mathbb{N} $, the output of Algorithm \ref{alg:rs} satisfies 
    \begin{align*}
        \Pr \( f^* - Y_T^{\max} \le L \Theta \( \frac{1 - e^{ \frac{\log \epsilon }{T} }}{ \kappa_s } \)^{ \frac{1}{d_s} } \) \ge 1 - \epsilon, \forall \epsilon \in (0,1) , 
    \end{align*}
    where $d_s$ is the scattering dimension of $f$ and $ \Theta = \max_{x,x' \in \X} \D (x,x')  $ is the diameter of $ \X $. 
\end{theorem}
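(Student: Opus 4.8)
The plan is to reduce everything to a single random trial, estimate the probability that one trial lands in a near-optimal region, and then boost this by independence across the $T$ trials. First I would fix a target radius $r > 0$ and consider the closed ball (or cube, in the special case) $B_r$ of radius $r$ centered at $x^*$. Since $x^* \in B_r$ and $f$ attains its maximum $f^*$ at $x^*$, we have $f_{B_r}^{\max} = f^*$, and the Lipschitz property gives $f_{B_r}^{\max} - f_{B_r}^{\min} \le L \cdot \Diam(B_r) \le 2 L r$ (or $L r$ with the right normalization). Applying the defining inequality of the scattering dimension with this ball $B_r$ and with $\alpha = 1$ — or more carefully with a suitable $\alpha \in (0,1]$ — I would get that a single sample $X_i$ satisfies
\begin{align*}
    \Pr \( f(X_i) \ge f^* - L \Theta \cdot (\text{something in } r) \) \ge \kappa_s \, \alpha^{d_s},
\end{align*}
so that the complementary event has probability at most $1 - \kappa_s \alpha^{d_s}$.

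Next I would use independence: since $X_1, \dots, X_T$ are i.i.d. under $\nu$ and $Y_T^{\max} = \max_i f(X_i)$, the event that \emph{every} trial misses the near-optimal region has probability at most $\( 1 - \kappa_s \alpha^{d_s} \)^T$. I want this to be at most $\epsilon$, i.e. $T \log(1 - \kappa_s \alpha^{d_s}) \le \log \epsilon$, which (using $\log(1-x) \le -x$ is too lossy here; instead solving the equality) leads to choosing $\alpha$ so that $\kappa_s \alpha^{d_s} = 1 - e^{(\log \epsilon)/T}$, i.e. $\alpha = \( \frac{1 - e^{(\log\epsilon)/T}}{\kappa_s} \)^{1/d_s}$. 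Plugging this $\alpha$ back into the gap bound from the previous step, and tracking that the optimality gap incurred is $L \Theta \alpha$ (up to the constant from $f_B^{\max} - f_B^{\min} \le L\Theta\alpha$ when $B$ is taken to be the whole space, or a cleaner argument scaling the ball radius), yields exactly
\begin{align*}
    f^* - Y_T^{\max} \le L \Theta \( \frac{1 - e^{(\log \epsilon)/T}}{\kappa_s} \)^{1/d_s}
\end{align*}
on an event of probability at least $1 - \epsilon$.

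The main subtlety — and the step I would be most careful about — is the interplay between the ball $B$ used in the scattering-dimension inequality and the factor $\Theta$ (the diameter of the whole space) appearing in the conclusion. The cleanest route is to apply the definition with $B = \X$ itself (which trivially contains $x^*$), so that $f_\X^{\max} - f_\X^{\min} \le L \Theta$ directly, and then the event $f(X_B) < f^* - \alpha(f_\X^{\max} - f_\X^{\min})$ has probability at most $1 - \kappa_s \alpha^{d_s}$; since $f^* - \alpha(f_\X^{\max}-f_\X^{\min}) \ge f^* - L\Theta\alpha$, this gives the desired one-trial bound without any Lipschitz-to-ball-radius conversion. One then only needs to double-check the edge cases $\alpha \in (0,1]$ (valid since $1 - e^{(\log\epsilon)/T} \in (0,1)$ and $\kappa_s \in (0,1]$, though one should confirm $\alpha \le 1$, which holds because $1 - e^{(\log\epsilon)/T} \le \kappa_s$ may fail for small $T$ — in that case $\alpha$ can simply be capped at $1$ and the bound $f^* - Y_T^{\max} \le L\Theta$ holds trivially). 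Assembling these pieces gives the theorem.
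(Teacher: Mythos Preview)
Your proposal is correct, and the ``cleanest route'' you settle on in the final paragraph --- taking $B = \X$ (which is itself the closed ball $B(x^*, \Theta)$), using $f_\X^{\max} - f_\X^{\min} \le L\Theta$, applying the scattering-dimension inequality, factoring by independence, and solving $(1-\kappa_s\alpha^{d_s})^T = \epsilon$ for $\alpha$ --- is exactly the paper's argument. Your extra care about the edge case $\alpha > 1$ is a nice addition that the paper glosses over.
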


Theorem \ref{thm:rs} implies that, with probability exceeding $ 1 - \epsilon $, the final optimality gap of random search is upper bounded by $ L \Theta \( \frac{1 - e^{ \frac{\log \epsilon }{T} }}{ \kappa_s } \)^{ \frac{1}{d_s} } $, which is of order $ \mathcal{O} \( L \Theta \( \frac{ \log (1/\epsilon) }{ \kappa_s T } \)^{ \frac{1}{d_s} } \) $. 

In addition, we have the following asymptotic result for random search, whose proof can be found in the Appendix. 

\begin{theorem} 
    \label{thm:rs-asymp}
    Consider a compact metric measure space $(\X, \D, \mu)$ with diameter $\Theta$. 
    Let $\Pr$ be the probability measure defined with respect to $\mu$. 
    Let $f : \X \to \R$ be a Lipschitz function with Lipschitz constant $L$. Let $f^*$ be the maximum value of $f$ in $\X$. 
    Let $ \omega_T$ be an arbitrary sequence such that $ \lim\limits_{T \to \infty} \omega_T = \infty $ and $\omega_T = o (T)$. Then $ \( \frac{ T }{\omega_T} \)^{\frac{1}{d_s}} \( f^* - Y_{T}^{\max} \) \overset{P}{\to} 0 $.  
\end{theorem}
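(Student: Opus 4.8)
The plan is to bootstrap the asymptotic statement from the non-asymptotic guarantee of Theorem~\ref{thm:rs}, by letting the failure probability $\epsilon$ in that bound depend on $T$ and decay slowly. Recall that convergence in probability of $Z_T := \( \frac{T}{\omega_T} \)^{1/d_s} \( f^* - Y_T^{\max} \)$ to $0$ means: for every $\eta, \delta > 0$ there is $T_0$ such that $\Pr(Z_T > \eta) < \delta$ for all $T \ge T_0$ (note $Z_T \ge 0$ always, since $f^*$ is the maximum value of $f$).

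First I would fix $\eta, \delta > 0$ and invoke Theorem~\ref{thm:rs} with the choice $\epsilon = \epsilon_T := e^{-\sqrt{\omega_T}}$, which lies in $(0,1)$ once $\omega_T > 0$, i.e. for all large $T$. This yields that with probability at least $1 - \epsilon_T$,
\begin{align*}
f^* - Y_T^{\max} \le L \Theta \( \frac{ 1 - e^{ \log \epsilon_T / T } }{ \kappa_s } \)^{1/d_s} \le L \Theta \( \frac{ \sqrt{\omega_T} }{ \kappa_s\, T } \)^{1/d_s},
\end{align*}
where the second inequality uses the elementary bound $1 - e^{-x} \le x$ for $x \ge 0$, applied with $x = \log(1/\epsilon_T)/T = \sqrt{\omega_T}/T$. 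Multiplying through by $\( T/\omega_T \)^{1/d_s}$ gives, on the same event,
\begin{align*}
Z_T \le L \Theta \( \frac{ \sqrt{\omega_T} }{ \kappa_s\, \omega_T } \)^{1/d_s} = L \Theta \( \kappa_s \sqrt{\omega_T} \)^{-1/d_s} \longrightarrow 0 \quad \text{as } T \to \infty,
\end{align*}
since $\omega_T \to \infty$ and $d_s > 0$. Hence for all large $T$ the (deterministic) right-hand side drops below $\eta$, so the event $\{Z_T > \eta\}$ is contained in the failure event of Theorem~\ref{thm:rs}; thus $\Pr(Z_T > \eta) \le \epsilon_T$, and since $\epsilon_T \to 0$ we also have $\epsilon_T < \delta$ eventually. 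Combining, $\Pr(Z_T > \eta) < \delta$ for all sufficiently large $T$, which is the desired conclusion.

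The one point that requires care — and essentially the only obstacle — is the choice of the $T$-dependent confidence level $\epsilon_T$. It must be slow enough that $\log(1/\epsilon_T) = o(\omega_T)$, so that the rescaled high-probability bound, which behaves like $\( \log(1/\epsilon_T)/(\kappa_s \omega_T) \)^{1/d_s}$, still tends to $0$; yet fast enough that $\epsilon_T \to 0$, so that the failure probability is itself negligible in the limit. Any sequence with $1 \ll \log(1/\epsilon_T) \ll \omega_T$ does the job, and $\epsilon_T = e^{-\sqrt{\omega_T}}$ is one convenient instance. (The argument tacitly uses $d_s > 0$; the degenerate case $d_s = 0$ is excluded throughout the paper, and the hypothesis $\omega_T = o(T)$ is what makes the statement non-trivial, though the bound above does not actually need it.)
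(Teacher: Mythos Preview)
Your proposal is correct and follows essentially the same approach as the paper. The only difference is packaging: the paper re-applies the scattering-dimension tail bound (the inequality~(\ref{eq:mos}) from the proof of Theorem~\ref{thm:rs}) directly with $\alpha = \beta(\omega_T/T)^{1/d_s}$ and then observes $\lim_{T\to\infty}(1-\kappa_s\beta^{d_s}\omega_T/T)^T = \lim_{T\to\infty} e^{-\kappa_s\beta^{d_s}\omega_T} = 0$, whereas you invoke Theorem~\ref{thm:rs} as a black box with a well-chosen $\epsilon_T$; the two are equivalent.
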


Note that $\omega_T$ can be slowly varying. For example, $ \omega_T $ can go to infinity at the same rate as $ \log \log \log \log T $.

\subsection{The Random Search Algorithm in Noisy Environments}

Till now, the environments we consider are noiseless. Next we consider an environment where the function values are corrupted by $i.i.d.$ noise. Let $W$ denote the real-valued noise random variable. In addition, we assume the law of $W$ is absolutely continuous with respect to the Lebesgue measure. 
Let $f_W$ and $F_W$ be the $pdf$ and $cdf$ of $W$. The random search algorithm can be summarized as follows. 
    \begin{algorithm}[H]
    \caption{Random Search (in Noisy Environments)} 
    \label{alg:rs-noisy} 
    \begin{algorithmic}[1]  
        \STATE \textbf{Input.} The space $ (  \X, \D , \nu ) $. Noisy zeroth-order oracle to the unknown function $f : \X \to \R$. Total number of trials $T$. 
        \STATE Randomly select $T$ points $ \{X_i\}_{i=1}^T \subseteq \X$, 
        where each $X_i$ is governed by the law of $ \nu $.
        \STATE Observe noisy function values $ \{ f (X_1) + W_1, f (X_2) + W_2, \cdots, f (X_T) + W_T \} $.
        \STATE \textbf{Output} $ Y_T^{\max} = \max \{ f (X_1) + W_1, f (X_2) + W_2, \cdots, f (X_T) + W_T \} $, and $ X_T^* \in \arg\max_{X_i} \{ f (X_i) + W_i \} $. /* Here we only observe the noise-corrupted functions $\{ f (X_i) + W_i \}_{i=1}^T $. */
    \end{algorithmic} 
\end{algorithm} 

The performance guarantee for Algorithm \ref{alg:rs-noisy} is found in Theorem \ref{thm:rs-noisy}. The proof of Theorem \ref{thm:rs-noisy} can be found in the Appendix.

\begin{theorem} 
    \label{thm:rs-noisy} 
    Let $ W_i $ be $i.i.d.$ random variables that are positively supported on interval $[a,b]$ for some $a,b \in \R$. Then for any sequence $\omega_T$ that satisfies $ \omega_T \to \infty $, the output of Algorithm \ref{alg:rs-noisy} satisfies $ \( \frac{T}{\omega_T} \)^{\frac{1}{d_s + 1}} \( f^* + b - Y_T^{\max} \) \overset{P}{\to} 0 $, where $f^*$ is the maximum of $f$ over $\X$.  
\end{theorem}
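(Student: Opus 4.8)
The plan is to reduce the noisy statement to the noise-free Theorem~\ref{thm:rs-asymp} by treating the noise as contributing an ``extra dimension'' to the scattering behavior. The key observation is that near the maximum, the quantity $f^* + b - (f(X_i) + W_i)$ is small exactly when two independent events conspire: $X_i$ lands near $x^*$ (so $f^* - f(X_i)$ is small, governed by scattering dimension $d_s$) and $W_i$ is near the top of its support $b$ (so $b - W_i$ is small, governed by the tail behavior of $W$ near $b$). First I would fix $\epsilon \in (0,1)$ and a target accuracy $\eta > 0$, and write
\begin{align*}
    \Pr\!\( f^* + b - Y_T^{\max} > \eta \) = \prod_{i=1}^T \Pr\!\( f^* + b - (f(X_i) + W_i) > \eta \) = \(1 - p_\eta\)^T,
\end{align*}
where $p_\eta := \Pr\!\( f(X) + W > f^* + b - \eta \)$ and $X \sim \nu$, $W$ independent. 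So the whole theorem comes down to a sufficiently good lower bound on $p_\eta$: if $T p_\eta \to \infty$ we are done by $(1-p_\eta)^T \le e^{-T p_\eta}$.

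The next step is to lower-bound $p_\eta$. The event $\{ f(X) + W > f^* + b - \eta \}$ contains the event $\{ f^* - f(X) < \eta/2 \} \cap \{ b - W < \eta/2 \}$, and by independence these factor. For the first factor, I would invoke the scattering-dimension definition applied to a fixed closed ball $B$ containing $x^*$ (e.g., $B = \X$ itself, or a fixed ball of positive measure around $x^*$): Definition~\ref{def:scattering} combined with the Lipschitz bound $f^* - f(X) \le L\,\D(X, x^*)$ gives, roughly, $\Pr( f^* - f(X) < \eta/2 ) \ge \kappa_s \cdot c \cdot \eta^{d_s}$ for $\eta$ small, where $c$ absorbs constants like $\Theta^{-d_s}$ and the measure of $B$ --- this is exactly the content already extracted in the proof of Theorem~\ref{thm:rs-asymp}, so I can cite that. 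For the second factor, since $W$ is supported on $[a,b]$ with ``positive support'' (I read this as $\Pr(W > b - s) > 0$ for every $s > 0$, i.e.\ $b$ is genuinely in the support), the function $s \mapsto \Pr(W > b - s)$ is strictly positive for $s > 0$; to get the exponent $+1$ one needs a lower bound of the form $\Pr(W > b - s) \ge c_W s$ near $s = 0$. This requires the density $f_W$ to be bounded below near $b$, or more precisely that $1 - F_W(b - s) \gtrsim s$; if the paper only assumes absolute continuity this may need a mild additional hypothesis, or one argues that absolute continuity plus $b$ being the essential supremum already forces $1 - F_W(b-s)$ to not vanish faster than linearly along a subsequence --- I would check the paper's convention here. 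Granting the linear bound, $p_\eta \ge \kappa_s\, c\, c_W\, (\eta/2)^{d_s + 1}$.

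Combining, for any fixed $\eta$ we get $T p_\eta \ge C(\eta) T \to \infty$, giving $\Pr( f^* + b - Y_T^{\max} > \eta ) \to 0$, which is convergence in probability of $f^* + b - Y_T^{\max}$ to $0$. To upgrade this to the rate statement $\(T/\omega_T\)^{1/(d_s+1)} (f^* + b - Y_T^{\max}) \overset{P}{\to} 0$, I would let $\eta$ shrink with $T$: set $\eta_T = \delta (\omega_T / T)^{1/(d_s+1)}$ for arbitrary fixed $\delta > 0$. Then
\begin{align*}
    T p_{\eta_T} \ge C\, T\, \eta_T^{d_s + 1} = C\, \delta^{d_s+1}\, \omega_T \to \infty,
\end{align*}
so $\Pr\!\( \(T/\omega_T\)^{1/(d_s+1)}(f^* + b - Y_T^{\max}) > \delta \) \le \exp(-C \delta^{d_s+1} \omega_T) \to 0$ for every $\delta > 0$, which is exactly the claimed convergence. (One must also check the nonnegativity direction, i.e.\ that $f^* + b - Y_T^{\max} \ge 0$ almost surely, which is immediate since $f(X_i) \le f^*$ and $W_i \le b$.)

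The main obstacle I anticipate is the second factor: controlling $\Pr(W > b - s)$ from below by a linear function of $s$. Absolute continuity of $W$ alone does not guarantee this (the density could vanish rapidly as it approaches $b$, e.g.\ $f_W(w) \propto (b-w)^k$), so either the paper tacitly assumes a density bounded away from zero near $b$, or the exponent $d_s + 1$ should really read $d_s + \rho$ for some tail exponent $\rho$ and the clean ``$+1$'' corresponds to the uniform-type case. I would look carefully at the precise hypotheses in the appendix; everything else (the product formula, the $e^{-Tp}$ bound, importing the $\eta^{d_s}$ estimate from Theorem~\ref{thm:rs-asymp}, and the rate-tuning via $\eta_T$) is routine once that lower bound is in hand.
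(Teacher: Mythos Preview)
Your approach is correct and shares the same skeleton as the paper's: both reduce to lower-bounding $p_\eta = \Pr(f(X) + W > f^* + b - \eta)$ by a constant times $\eta^{d_s+1}$, then tune $\eta_T = \delta (\omega_T/T)^{1/(d_s+1)}$ and use $(1-p_{\eta_T})^T \le e^{-T p_{\eta_T}} \to 0$. The execution of the key lower bound differs. You use the box inclusion $\{f^* - f(X) \le \eta/2\} \cap \{b - W \le \eta/2\}$ and factor by independence; the paper instead computes the convolution directly (setting $Z = b - W$),
\begin{align*}
\Pr\( \frac{f^* - f(X)}{L\Theta} + Z > \beta \)
&= \int f_Z(z)\,\Pr\( \frac{f^* - f(X)}{L\Theta} > \beta - z \)\,dz \\
&\le 1 - \kappa_s \kappa_p \int_0^\beta (\beta - z)^{d_s}\,dz = 1 - \frac{\kappa_s \kappa_p}{d_s+1}\beta^{d_s+1},
\end{align*}
feeding the scattering bound $\Pr(\,\cdot\, > \beta - z) \le 1 - \kappa_s(\beta - z)^{d_s}$ pointwise into the integral. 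Your product argument is more elementary and sidesteps the integration, at the price of a constant factor of order $2^{d_s+1}$, which is immaterial for the in-probability statement. Your diagnosis of the ``obstacle'' is exactly right: the paper does read \emph{positively supported} as the density $f_W$ being bounded below by some $\kappa_p > 0$ on $[a,b]$, and that lower bound on the noise density is precisely what delivers the ``$+1$'' in the exponent.
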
 




\section{Zooming Dimension versus Scattering Dimension}
\label{sec:zooming}

The term zooming dimension was coined to characterize the landscape of the underlying function \cite{kleinberg2008multi,bubeck2011x}. Scattering dimension is closely related to zooming dimension. Before proceeding, we review the definition of zooming dimension. 


Consider the compact doubling metric space $( \X, \D )$, where $ \D $ is a doubling metric. Let $d$ be the doubling dimension of the metric space $ (\X, \D) $. 
Consider a Lipschitz function $f : \X \in \R$. The set of $r$-optimal arms is defined as $ S (r) = \{ x \in \X : \max_{z \in \X} f (z) - f (x) \le r \} $. 
The $r$-zooming number of $f$ is defined as 
\begin{align*} 
    N_r :=& \;  \min \Big\{ | F | :  
    F = \{ B : \text{$B$ is a metric ball of radius $r$} \} 
     \text{ and } \cup_{B \in F } = S (2r)  \Big\} . 
\end{align*} 
In words, $N_r$ is the $r$-covering number of the set $S(2r)$. 

The zooming dimension is defined as
\begin{align*} 
    {d}_z := \min \{   d ' \geq0: \exists a>0, 
    \text{such that } N_r \le a r^{-d'}, \; \forall r > 0 \}. 
\end{align*} 


To foster the discussion, we state below a property of zooming dimension. 
\begin{proposition} 
    \label{prop} 
    There exists a function $f: \X \to \R$ such that the zooming dimension is infinite if there exists $z^* \in \X$ and $r > 0$ such that the metric ball $ B (z^*, 2r) $ cannot be covered by finitely many balls of radius $r$. 
\end{proposition}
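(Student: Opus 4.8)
The plan is to exhibit a single explicit Lipschitz function whose $r$-zooming number $N_r$ is literally infinite at the scale $r$ furnished by the hypothesis; once $N_r = \infty$, no polynomial bound of the form $N_\rho \le a\rho^{-d'}$ can hold, so the zooming dimension is infinite by definition.

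Concretely, I would take $f(x) := -\D(x, z^*)$. This function is $1$-Lipschitz, since $|\D(x,z^*) - \D(y,z^*)| \le \D(x,y)$ for all $x,y \in \X$; it is continuous on the compact space $\X$; and it attains its maximum value $f^* = 0$ at $x = z^*$, so it is an admissible instance for the definition of zooming dimension. With this choice, the set of $2r$-optimal arms is $S(2r) = \{ x \in \X : f^* - f(x) \le 2r \} = \{ x \in \X : \D(x, z^*) \le 2r \}$, i.e.\ $S(2r)$ is exactly the closed metric ball of radius $2r$ about $z^*$; in particular $S(2r)$ contains the ball $B(z^*, 2r)$ named in the hypothesis.

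Next I would invoke the hypothesis: $B(z^*, 2r)$ cannot be covered by finitely many balls of radius $r$, and covering a larger set is no easier, so $S(2r) \supseteq B(z^*, 2r)$ cannot be covered by finitely many radius-$r$ balls either. Hence there is no finite family $F$ of radius-$r$ balls with $\bigcup_{B \in F} B = S(2r)$, which is exactly the statement $N_r = \infty$. Finally, for any $d' \ge 0$ and any $a > 0$, the requirement ``$N_\rho \le a \rho^{-d'}$ for all $\rho > 0$'' evaluated at $\rho = r$ would force $a r^{-d'} \ge N_r = \infty$, which is impossible for finite $a$; so no pair $(d', a)$ witnesses such a bound, the infimum defining $d_z$ is over the empty set, and $d_z = \infty$, i.e.\ $f$ has infinite zooming dimension.

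There is no real obstacle here; the only point needing a moment of care is the inclusion $B(z^*, 2r) \subseteq S(2r)$, which should be checked against whichever convention (open versus closed ball, centers required to lie in $\X$ or not) is adopted for $B$ and for $N_r$ --- but since $S(2r)$ is precisely the closed $2r$-ball about $z^*$, the inclusion holds under every reasonable reading. The essential idea is simply that the ``tent'' function $-\D(\cdot, z^*)$ transfers the non-doubling behavior of the metric at scale $r$ directly into a non-doubling near-optimal set, which forces the zooming number, and hence the zooming dimension, to blow up.
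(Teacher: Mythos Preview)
Your proposal is correct and takes essentially the same approach as the paper: both construct the tent function $f(x) = -\D(x, z^*)$ and observe that $S(2r)$ is then the $2r$-ball about $z^*$, so the non-covering hypothesis forces $N_r = \infty$ and hence $d_z = \infty$. The paper's proof is simply a one-line statement of this construction without the supporting details you supply.
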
 

\begin{proof} 
    Suppose there exists $z^* \in \X$ and $r > 0$ such that for some $r$, the metric ball $ B (z^*, 2r) $ cannot be covered by finitely many balls of radius $r$. 
    Define $f : \X \to \R$ so that $f (x) = - \D (x, z^*)$, and the zooming dimension of $f$ with respect to $\D$ is infinity. 
\end{proof} 


\subsection{The Curse and Blessing of Zooming Dimension} 

According to classical theory, functions with smaller zooming dimension are easier to optimize, because larger zooming dimension means more arms are similar and harder to distinguish. 
On the other hand, larger zooming dimension also implies a larger near-optimal region, which increases the chance of finding a near-optimal region by a random sample in the domain. 
To sum up, a function with large zooming dimension has two contrasting features: 



\begin{itemize}[leftmargin=*]
\item 
\textbf{(The Curse of Zooming Dimension)} 
When the zooming dimension $d_z$ is large, the difference between arms is small. From this perspective, the problem of finding the exact maximum is hard when $d_z$ is large. 
\item 
\textbf{(The Blessing of Zooming Dimension)} When the zooming dimension $d_z$ is large, the region of near-optimal arms is large. From this perspective, the problem of finding an approximate maximum is easy when $d_z$ is large. 
\end{itemize} 

Existing theory mainly focuses on the cursing side of zooming dimension. In particular, in the language of bandit, existing works show that the regret rate deteriorates as $d_z$ increases \cite{kleinberg2008multi,bubeck2008tree,fenglipschitz}. The blessing side of zooming dimension has been largely overlooked. The blessing effect is largely captured by scattering dimension, as previously discussed. 

\subsection{A Numeric Example} 

Recall the example functions we considered in Section \ref{sec:scatter-gp}. 
We observe the following property for such function $g_p$. 
\begin{observation} 
    \label{obs:gp} 
    Consider the metric space $([0,1]^d, \| \cdot \|_\infty)$. Let $p \ge 1$, and let $ g_p (x) : [0,1]^d \to \R$ be defined as $ g_p (x) = 1 - \frac{1}{p} \| x \|_\infty^p $. The zooming dimension of $g_p$ is $d_z = \frac{p-1}{p} d $. 
\end{observation}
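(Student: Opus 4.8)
The plan is to compute the $r$-zooming number $N_r$ of $g_p$ up to multiplicative constants and read off the exponent; everything reduces to a covering estimate for a cube in the $\ell_\infty$ metric. First I would identify the maximizer and the near-optimal sets: on $[0,1]^d$ one has $\|x\|_\infty \ge 0$ with equality only at the corner $x^\ast = 0$, so $g_p$ attains its maximum $f^\ast = 1$ at $x^\ast = 0$, and for every $r > 0$
\begin{align*}
S(r) = \{ x \in [0,1]^d : 1 - g_p(x) \le r \} = \{ x \in [0,1]^d : \|x\|_\infty \le (pr)^{1/p} \} = [0, \rho_r]^d, \qquad \rho_r := \min\{ (pr)^{1/p},\, 1 \}.
\end{align*}
Hence $S(2r)$ is the corner sub-cube $[0,\sigma_r]^d$ with $\sigma_r := \min\{ (2pr)^{1/p},\, 1 \}$.

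Second, I would upper bound $N_r$. A ball of radius $r$ in $(\X,\|\cdot\|_\infty)$ is an axis-aligned cube of side $2r$, so chopping each coordinate interval $[0,\sigma_r]$ into $\lceil \sigma_r/(2r) \rceil$ subintervals of length at most $2r$ and taking products exhibits a cover of $[0,\sigma_r]^d$ by $\lceil \sigma_r/(2r) \rceil^d$ such balls, all centered inside $\X$. Using $\sigma_r \le (2pr)^{1/p}$ gives $\sigma_r/(2r) \le \tfrac{1}{2} (2p)^{1/p}\, r^{-(p-1)/p}$, so for $r \le 1$ (where $r^{-(p-1)/p} \ge 1$),
\begin{align*}
N_r \;\le\; \left\lceil \frac{\sigma_r}{2r} \right\rceil^{d} \;\le\; \Big( \tfrac{1}{2} (2p)^{1/p} + 1 \Big)^{d}\, r^{-\frac{p-1}{p} d} .
\end{align*}
(For $r$ bounded away from $0$, i.e. $2pr \ge 1$, one has $S(2r)=[0,1]^d$ and $N_r \le (p+1)^d$ directly, so only the small-$r$ regime matters, as usual.) This shows $d_z \le \frac{p-1}{p} d$.

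Third, for the matching lower bound I would invoke a volume argument. For $r \le \tfrac{1}{2p}$ we have $\sigma_r = (2pr)^{1/p}$, so $S(2r)$ has Lebesgue measure $(2pr)^{d/p}$, while each radius-$r$ ball has Lebesgue measure at most $(2r)^d$; any cover therefore uses at least
\begin{align*}
N_r \;\ge\; \frac{(2pr)^{d/p}}{(2r)^d} \;=\; 2^{-d}(2p)^{d/p}\, r^{-\frac{p-1}{p} d}
\end{align*}
balls. Consequently, for any $d' < \frac{p-1}{p}d$ the ratio $N_r\, r^{d'} \ge 2^{-d}(2p)^{d/p}\, r^{\,d' - \frac{p-1}{p}d} \to \infty$ as $r \to 0^+$, so no finite constant $a$ can certify the exponent $d'$; hence $d_z \ge \frac{p-1}{p}d$, and together with the upper bound $d_z = \frac{p-1}{p}d$.

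I do not expect a genuine obstacle here — the argument is essentially a direct computation — and the only points needing a little care are: (i) that $S(2r)$ is the corner cube $[0,\sigma_r]^d$ rather than a cube centered at $x^\ast$, which only makes covering easier; (ii) keeping the covering balls centered inside $\X$ when $\sigma_r$ is close to $1$; and (iii) restricting attention to the range of $r$ up to the diameter, as is implicit in the zooming-dimension condition whenever $d_z > 0$. As a consistency check, combining this with Proposition \ref{prop:gp} gives $d_s(g_p) + d_z(g_p) = \frac{d}{p} + \frac{p-1}{p}d = d$, the ambient dimension, in agreement with the claim in the introduction for norm polynomials with an interior stationary point.
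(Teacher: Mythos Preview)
Your proposal is correct and follows essentially the same route as the paper: identify $S(2r)$ as the corner cube $[0,(2pr)^{1/p}]^d$, compute its $r$-covering number, and read off the exponent $\tfrac{p-1}{p}d$. The paper's argument is a one-line sketch that simply states the covering count $(2p)^{d/p}\rho^{-(p-1)d/p}$ (using cubes of edge length $\rho$ rather than balls of radius $\rho$, which only changes constants); you are more thorough in that you also supply the volume lower bound on $N_r$ to rule out smaller exponents, and you handle the saturation $\sigma_r=1$ for large $r$, neither of which the paper writes out.
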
 

\if\submission0
\begin{figure}
    \centering
    \includegraphics[width = 0.45 \textwidth]{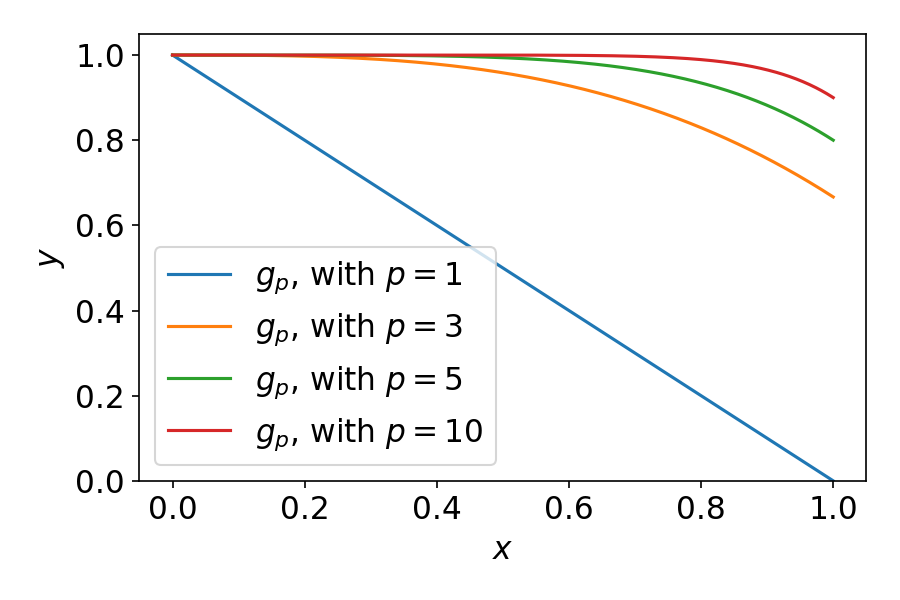} 
    \caption{The plots of $ g_p $ with $p = 1,3,5,10$ over $[0,1]$.} 
    \label{fig:demo} 
\end{figure}
\fi 

\if\submission1
\begin{figure}
    \centering
    \includegraphics[width = 0.45 \textwidth]{figures/gp.png} 
    \caption{The plots of $ g_p $ with $p = 1,3,5,10$ over $[0,1]$.} 
    \label{fig:demo} 
\end{figure}
\fi


Clearly $ g_p (x) $ is 1-Lipschitz. By definition of $g_p$, the $2 \rho$-optimal arms form a cube $ [ 0, ( 2 p \rho )^{\frac{1}{p}} ]^d $. We need $ \( 2 p \)^{\frac{d}{p}} \rho^{\frac{1-p}{p} d } $ cubes of edge-length $ \rho $ to cover $ [0, (2 p \rho)^{\frac{1}{p}} ]^d $. Thus the zooming dimension of $ g_p $ is $d_z = \frac{p-1}{p} d$.

As $p$ approaches infinity, the zooming dimension converges to $d$ and the zooming constant converges to 1. According to classic theory \cite{kleinberg2008multi,bubeck2011x}, the performance of an optimal-seeking bandit algorithm deteriorates as $p$ increases. The intuition behind this theory is that when $d_z$ is large, the arms are nearly indistinguishable. However, the function landscape of $g_p$ flattens as $p \to \infty $, and the problem of finding a near-optimal point becomes easier. For function $g_p$, the numeric sum of $d_s$ and $d_z$ equals the ambient dimension $ d $. In the Appendix we provide a more formal statement of the relation between $d_z$ and $d_s$. 

\subsection{Scattering Dimension Requires a Probability Measure} 

As we have shown in Proposition \ref{prop}, the zooming dimension is defined for a doubling metric. On contrary, the scattering dimension needs there to be a probability measure over the space. 
The need for specifying probability measure on the underlying space is simple: 
%
As per how scattering dimension is defined (Definition \ref{def:scattering}), there needs to be a well-defined probability measure over the space so that the sampling actions can be performed. 

At this point, some natural questions may arise: \emph{Is there a ``default'' choice for such probability measure?} If there exists a ``default'' probability measure on a doubling metric space, then we can use this ``default'' probability to define the scattering dimension. So, can we always find such a ``default'' probability measure? 

It turns out that, a ``default'' probability measure may not exists in a general metric space, and we really have to specify a probability measure. By a ``default'' probability measure we mean the following. 
\begin{definition}[Canonical probability measure] 
    \label{def:canon-prob}
    Let $ (X,d) $ be a compact metric space. For any $\epsilon > 0$, let $ N_\epsilon \subseteq X $ be an $\epsilon$-net of $ (X,d) $. That is, $N_\epsilon$ satisfies: 1. $ \cup_{x \in N_\epsilon} B (x, \epsilon) \supseteq X $ where $ B (x, \epsilon) $ is the (open) ball of radius $\epsilon$ centered at $x$; 2. Any set that satisfies item 1 has cardinality no smaller than $N_\epsilon$. With respect to $ N_\epsilon $, define a probability measure such that for any Borel set $ Y \subseteq X $, $ \mu_\epsilon := \frac{|Y \cap N_\epsilon|}{|N_\epsilon|} $. 
    If there exists a measure $\mu$ such that for all Borel set $Y \subseteq B$, $\mu (Y) = \lim_{\epsilon \to 0} \mu_\epsilon (Y) $ for all choice of $\epsilon$-nets, then $\mu$ is called the canonical probability measure of $ (X,d) $. 
\end{definition} 

\begin{remark} 
    The ``canonical probability measure'' (Definition \ref{def:canon-prob}) is different from the Haussdorff measure, since this measure (Definition \ref{def:canon-prob}) is not a metric outer measure. 
\end{remark} 

The reason that such a probability measure is called canonical with respect to the metric is: If a set $Y$ is ``large'' with respect to the metric, then the set $Y$ is ``large'' with respect to the measure $\mu$. 
However, such a ``canonical probability measure'' may not always exist. 
Below we provide an example that show a canonical probability measure does not always exist. 
This example is inspired by \cite{278375}. 

Consider the set 
\begin{align*} 
    S = \Bigg\{ \sum_{n=1}^\infty \frac{a_{2n-1}}{2^{2n-1}} + \frac{a_{2n}}{3^{2n}} : & \; a_{2n-1} \in \{ 0,1 \} 
    \text{ and } a_{2n} \in \{ 0,1,2 \} \Bigg\}.  
\end{align*}  
For any $ x = \sum_{n=1} \frac{a_{2n-1}}{2^{2n-1}} + \frac{a_{2n}}{2^{2n}} \in S$ and  $ x' = \sum_{n=1} \frac{a_{2n-1}'}{2^{2n-1}} + \frac{a_{2n}'}{2^{2n}} \in S $, define the confluence of $x$ and $x'$ as 
\begin{align*} 
    (x|x') := \inf 
    \{ k \in \mathbb{N} : a_i = a_i' \text{ for all } i \le k \}. 
\end{align*} 
Then define a distance metric: $ d (x,x') = 2^{-(x|x')} $ and by convention $ 2^{-\infty} = 0 $. Note that elements in $S$ can be represented by sequences $(a_1,a_2, \cdots)$, $a_{2n-1} \in \{ 0,1 \}$ and $a_{2n} \in \{ 0,1,2 \}$. 
Consider the set of digits ending with infinitely many consecutive 1's: 
\begin{align*} 
    Y = \cup_{k \in \mathbb{N}_+} \{ ( a_1, a_2, \cdots, a_k, 1,1,\cdots ) \in S \} , 
\end{align*} 
and $\epsilon$-nets for $\epsilon = 2^{-n}$: 
\begin{align*} 
    N_{2^{-2n+1}} := \{ \( a_1, a_2, \cdots, a_{2n-1}, 1,1,1,\cdots \) \in S \}, 
\end{align*} 
and 
\begin{align*} 
    N_{2^{-2n}} := \{ \( a_1, a_2, \cdots, a_{2n} , 0,0,0,\cdots\) \in S \}, 
\end{align*}   

Then we have 
\begin{align*} 
    \mu_{2^{-2n+1}} (S \setminus Y) = 0 , \quad \text{ and } \quad 
    \mu_{2^{-2n}} (Y) = 0 . 
\end{align*} 
This shows that a canonical probability measure (Definition \ref{def:canon-prob}) does not always exist. Therefore, we really have to specify a probability measure over the space.

\section{The BLiN-MOS Algorithm} 
\label{sec:mos}

The Batched Lipschitz Narrowing (BLiN) algorithm was recently introduced as an optimal solver for Lipschitz bandits with batched feedback \cite{fenglipschitz}. In particular, BLiN simultaneously achieves state-of-the-art regret rate, with optimal communication complexity. 
In this section, we propose an improved version of BLiN: Batched Lipschitz Narrowing with Maximum Order Statistics (BLiN-MOS). BLiN-MOS uses the BLiN framework, and integrates in the advantages of random search. The motivation behind BLiN-MOS is as follows. For a reward maximizing task such as the bandit learning, we need not estimate the average payoff in each region. Instead, it suffices to estimate the best payoff in each region. The algorithm procedure of BLiN-MOS is summarized in Algorithm \ref{alg:large-d}. The notations and conventions for Algorithm \ref{alg:large-d} can be found in Section \ref{sec:lip-conv}.

\begin{algorithm}[ht] 
	\caption{Batched Lipschitz Narrowing with Max Order Statistics (BLiN-MOS)} 
	\label{alg:large-d} 
	\begin{algorithmic}[1]  
		\STATE \textbf{Input.} Arm set $\A=[0,1]^d$. Time horizon $T$. Probability parameter $\epsilon$. Number of batches $M$. Scattering parameter $ \beta $. /* We let $\beta = 1$ to avoid clutter. */
		\STATE \textbf{Initialization.} Edge-length sequence $\{r_m\}_{m=1}^{M+1}$; The first grid point $t_1=0$; Equally partition $\mathcal{A}$ to $r_1^d$ subcubes and define $\mathcal{A}_{1}$ as the collection of these subcubes. 
		\FOR{$m=1,2,\cdots,M$} 
                \STATE Compute $ n_m = \frac{\log ( \epsilon)}{\log \( 1 - \frac{ \kappa_s }{ \sqrt{2\pi} (d_s + 1) } \exp \( - \frac{1 }{2} \) r_m \) }$. 
                \STATE Uniformly randomly sample $n_m$ points $ x_{q,1}, x_{q,2}, \cdots, x_{q,n_m} $ from $q \in \mathcal{A}_m$. Let $y_{q,1}, y_{q,2}, \cdots, y_{q,n_m}$ be the associated noisy samples. 
                \STATE Compute $ Y_{q,n_m} = \max \{ y_{q,1} , y_{q,2} , \cdots, y_{q,n_m } \} $ and $ Y_m^{\max} := \max_{q \in \mathcal{A}_m } Y_{q,n_m} $. 
                \STATE Let $\mathcal{A}_{m}' = \{ q \in  \mathcal{A}_m : {Y}_m^{\max} - {Y}_{q,n_m} \le r_m \}$. /* Elimination step */ 
                \STATE Dyadically partition the cubes in $\mathcal{A}_{m}'$, and collect these cubes to form $ \mathcal{A}_{m+ 1 } $. 
                \STATE Compute $t_{m+1}=t_m+2(r_{m}/r_{m+1})^d\cdot|\mathcal{A}_{m }' | \cdot n_{m+1}$. 
                \IF{$t_{m+1}\geq T$} 
                    \STATE Finish the remaining pulls arbitrarily. \textbf{Terminate} the algorithm. 
                \ENDIF 
		\ENDFOR 
	\end{algorithmic} 
\end{algorithm}


\subsection{Notations and Conventions} 
\label{sec:lip-conv} 

We conform to the following conventions that are common for Lipschitz bandit problems. Following \cite{kleinberg2008multi,slivkins2011contextual}, we assume that the function of interests is $1$-Lipschitz. Following \cite{pmlr-v134-podimata21a,fenglipschitz}, we restrict our attention to the doubling metric space $ \( [0,1]^d, \| \cdot \|_{\infty} \) $. This metric space is doubling with balls being cubes. 


\begin{assumption}
    \label{assumption:measure}
    For all algorithmic analysis in $ \( [0,1]^d, \| \cdot \|_{\infty} \) $, we endow this space with the Lebesgue measure and define the scattering dimension and scattering constant with respect to this metric measure space. 
\end{assumption}

In addition, we assume that $\beta = 1$ in Algorithm \ref{alg:large-d} solely for the sake of cleaner presentation. 

Define the set of $r$-optimal arms as $ S (r) = \{ x \in \A : \Delta_x \le r \} $, where $\Delta_x = f^* - f(x)$. For any $r=2^{-i}$, the decision space $[0,1]^d$ can be equally divided into $2^{di}$ cubes with edge length $r$, which are referred to as dyadic cubes. 
The $r$-zooming number is defined as 
\begin{align*} 
    N_r := \#\{C: \text{$C$ is a standard cube with edge length $r$ and $C\subset S(6 r)$}\} . 
\end{align*} 
The zooming dimension is then defined as 
\begin{align*} 
    d_z := \min \{  d\geq0: \exists a>0,\;N_r \le ar^{-d},\;\forall r=2^{-i} \text{ for some $i \in \mathbb{N}$}\}. 
\end{align*}
Moreover, we define the zooming constant $\kappa_z$ as the minimal $a$ to make the above inequality true for $d_z$, $\kappa_z=\min\{  a>0:\;N_r \le ar^{-d_z},\;\forall r=2^{-i} \text{ for some $i \in \mathbb{N}$} \}$.

\subsection{Analysis of BLiN-MOS}

The performance guarantee for Algorithm \ref{alg:large-d} with $r_m = 2^{-m}$ can be found in Theorem \ref{thm:mos-noise-total}, after Assumption \ref{assumption:noise}. 

\begin{assumption} 
    \label{assumption:noise}
    We assume that all observations are corrupted by $iid$ copies of noise random variable $W$, and that $ W $ is strictly positively supported on $ [-1,1] $. Let $\kappa_p$ lower bound the density of $W$. 
\end{assumption} 

In Assumption \ref{assumption:noise}, the choice of $[-1,1]$ is purely for the purpose of cleaner presentation. Our results generalize to other compact intervals. 
 
\begin{theorem}
    \label{thm:mos-noise-total}
    Instate Assumptions \ref{assumption:measure} and \ref{assumption:noise}. 
    Let $r_m = 2^{-m}$. Let $T$ be the total time horizon and let $\epsilon = \frac{1}{T^2}$ in BLiN-MOS. Let the number of batches $M \ge \frac{\log \left( \frac{\kappa_s \kappa_p T (2^{d_z} - 1)}{\kappa_z d_z (d_s+1) \cdot 2^{d_z + 1} \log T} \right) }{(d_z+1) \log 2} $. 
    Then with probability exceeding $1-\frac{2}{T}$, the total regret of BLiN-MOS in noise environments satisfies
    \begin{equation*}
        R(T) \leq c \cdot T^{\frac{d_z }{d_z + 1 }} \cdot \left( \log T \right)^{\frac{1}{d_z + 1 }},
    \end{equation*}
    where $R(T)$ denote the total regret up to time $T$, and $c$ is a constant independent of $T$. 
\end{theorem}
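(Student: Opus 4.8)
The plan is to run a batched-elimination analysis in the spirit of \cite{fenglipschitz}, the one new ingredient being an anti-concentration bound for the noisy maximum order statistic $Y_{q,n_m}$ produced inside a cube, which is where the scattering dimension and the factor $1/(d_s+1)$ enter. Write $b:=1$ for the right endpoint of the noise support. \emph{Step 1 (key order-statistics lemma, clean event).} First I would prove: if $q\subseteq[0,1]^d$ is a dyadic cube of edge length $r_m$ with $x^*\in q$ (where $f(x^*)=f^*$), then $\Pr(Y_{q,n_m}<f^*+b-r_m)\le\epsilon$. Since $Y_{q,n_m}=\max_{i\le n_m}(f(x_{q,i})+W_i)$ with $x_{q,i}$ uniform on $q$ and $W_i$ i.i.d., the left side is $(1-p_m)^{n_m}$ with $p_m:=\Pr(f(X_q)+W\ge f^*+b-r_m)$. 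Setting $g:=f^*-f(X_q)\ge0$ and $h:=b-W\ge0$ (independent of $g$), the event is $\{g+h\le r_m\}$. From Definition \ref{def:scattering-simple} with $f_q^{\max}-f_q^{\min}\le\mathrm{diam}_{\|\cdot\|_\infty}(q)=r_m$ (1-Lipschitzness), $\Pr(g\le s)\ge\kappa_s(s/r_m)^{d_s}$ for $s\in(0,r_m]$, and from Assumption \ref{assumption:noise} $h$ has density $\ge\kappa_p$ on $[0,r_m]\subseteq[0,2]$. Convolving,
\begin{equation*}
p_m\;\ge\;\int_0^{r_m}\kappa_s\Big(\tfrac{r_m-h}{r_m}\Big)^{d_s}\kappa_p\,dh\;=\;\kappa_s\kappa_p\,r_m\int_0^1 v^{d_s}\,dv\;=\;\frac{\kappa_s\kappa_p\,r_m}{d_s+1}
\end{equation*}
(and bounding $\kappa_p\ge\tfrac1{\sqrt{2\pi}}e^{-1/2}$ recovers exactly the coefficient of $r_m$ on line 4 of Algorithm \ref{alg:large-d}). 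Since $x\mapsto\log(1-x)$ is decreasing and $p_m$ is at least that coefficient, $n_m\log(1-p_m)\le\log\epsilon$, i.e. $(1-p_m)^{n_m}\le\epsilon$. Letting $q_m^*\in\mathcal{A}_m$ denote the cube containing $x^*$ (well defined by Step 2), define $\mathcal{E}:=\bigcap_{m=1}^{M}\{Y_{q_m^*,n_m}\ge f^*+b-r_m\}$; a union bound over the $M=O(\log T)$ batches with $\epsilon=T^{-2}$ gives $\Pr(\mathcal{E})\ge1-M/T^2\ge1-2/T$.

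\emph{Step 2 (correctness under $\mathcal{E}$).} Deterministically $Y_m^{\max}\le f^*+b$, since every observed value is $f(x)+W\le f_q^{\max}+b\le f^*+b$. Induct on $m$: if $q_m^*\in\mathcal{A}_m$, then on $\mathcal{E}$, $Y_m^{\max}-Y_{q_m^*,n_m}\le(f^*+b)-(f^*+b-r_m)=r_m$, so $q_m^*$ survives the elimination on line 7 and its dyadic child containing $x^*$ lies in $\mathcal{A}_{m+1}$ — this is $q_{m+1}^*$, closing the induction, and in particular $Y_m^{\max}\ge f^*+b-r_m$ for all $m$. For any survivor $q\in\mathcal{A}_m'$, $f_q^{\max}+b\ge Y_{q,n_m}\ge Y_m^{\max}-r_m\ge f^*+b-2r_m$, so $q$ meets $S(2r_m)$; by 1-Lipschitzness every point of $q$ then has $\Delta_x\le3r_m$, i.e. $q\subseteq S(3r_m)$. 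Hence each cube of $\mathcal{A}_{m+1}$ lies inside a parent in $\mathcal{A}_m'\subseteq S(3r_m)=S(6r_{m+1})$, so $|\mathcal{A}_{m+1}|\le N_{r_{m+1}}\le\kappa_z r_{m+1}^{-d_z}$; likewise $|\mathcal{A}_m|\le\kappa_z r_m^{-d_z}$ for $m\ge2$, while $|\mathcal{A}_1|=r_1^{-d}=2^d$.

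\emph{Step 3 (summing the regret and fixing $M$).} On $\mathcal{E}$ every point pulled in batch $m\ge2$ lies in $S(6r_m)$, so its instantaneous regret is $\le6r_m$; and since $-\log(1-x)\ge x$, the definition of $n_m$ with $\epsilon=T^{-2}$ gives $n_m\le\frac{2(d_s+1)\log T}{\kappa_s\kappa_p\,r_m}$. Writing $D:=\frac{12(d_s+1)\kappa_z}{\kappa_s\kappa_p}$, batch $m\ge2$ costs at most $|\mathcal{A}_m|\,n_m\cdot6r_m\le D(\log T)\,r_m^{-d_z}$, and also at most $6r_m$ times the number of pulls it uses, which is $\le T$; so batch $m$ contributes $\le\min\{D(\log T)2^{md_z},\,6T\,2^{-m}\}$. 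The two bounds cross near the scale $r$ of order $(D\log T/T)^{1/(d_z+1)}$, and splitting the geometric sum there makes both tails geometric and of order $D^{1/(d_z+1)}(\log T)^{1/(d_z+1)}T^{d_z/(d_z+1)}$; batch $1$ adds only $O(2^d\log T)$. The terminal ``finish arbitrarily'' pulls are taken from the current active region $\mathcal{A}_{m^*+1}\subseteq S(6r_{m^*+1})$, costing $\le6r_{m^*+1}$ times at most $T$ pulls; the hypothesized lower bound on $M$ is exactly what forces $r_{m^*+1}$ (equivalently $2^{-M}$ if the loop runs to completion) to be no larger than the crossover scale, so this term is again $O(T^{d_z/(d_z+1)}(\log T)^{1/(d_z+1)})$. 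Collecting the pieces gives $R(T)\le c\,T^{d_z/(d_z+1)}(\log T)^{1/(d_z+1)}$ with $c$ depending only on $d,d_z,d_s,\kappa_z,\kappa_s,\kappa_p$, valid on $\mathcal{E}$, i.e. with probability $\ge1-2/T$.

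\emph{Main obstacle.} The crux is Step 1 — converting the single-sample scattering-dimension tail into an anti-concentration bound for the \emph{maximum} of $n_m$ noisy samples, and in particular carrying out the convolution of the location tail against the noise density carefully enough that the per-sample success probability comes out of order $r_m$ with the sharp constant $\kappa_s\kappa_p/(d_s+1)$ that calibrates $n_m$. Everything downstream is standard batched-narrowing bookkeeping: survival of the optimal cube, near-optimality and zooming-number count of the survivors, and the geometric summation with the crossover that pins $M$ so the horizon is just exhausted.
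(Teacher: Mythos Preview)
Your proposal is correct and follows essentially the same route as the paper: the convolution bound in Step~1 is exactly the paper's derivation of (\ref{eq:noisy-mos-bound})--(\ref{eq:scattering-ind}) underlying Lemma~\ref{lem:not-eli}, your clean event $\mathcal{E}$ and the induction in Step~2 reproduce Lemmas~\ref{lem:event} and~\ref{lem:opt-gap}, and the geometric summation in Step~3 matches the paper's final computation. The only cosmetic differences are that you bound $\Pr(\mathcal{E}^c)$ by a single union over the $M$ batches (cleaner than the paper's two-stage conditioning), and you phrase the choice of $M$ as a crossover between the two per-batch bounds rather than an explicit minimization --- both equivalent.
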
 


Next we present the proof of Theorem \ref{thm:mos-noise-total}, which relies on Lemmas \ref{lem:not-eli}, \ref{lem:event} and \ref{lem:opt-gap}. Proofs of Lemmas \ref{lem:not-eli}, \ref{lem:event} and \ref{lem:opt-gap} are in the Appendix. 

\begin{lemma} 
    \label{lem:not-eli} 
    Instate Assumptions \ref{assumption:measure} and \ref{assumption:noise}. Let $\epsilon = \frac{1}{T^2}$ in BLiN-MOS (Algorithm \ref{alg:large-d}). 
    With probability exceeding $ \( 1 - \frac{1}{T^2} \)^T $, the optimal arm $x^*$ is not eliminated throughout the algorithm execution of BLiN-MOS. 
\end{lemma}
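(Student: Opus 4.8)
The plan is to mimic, batch by batch, the random‑search guarantee of Theorem~\ref{thm:rs-noisy} inside the cube that contains the optimal arm. Let $q^*_{(m)}\in\mathcal A_m$ be the cube containing $x^*$ at batch $m$. Since the initial partition and the dyadic refinements are deterministic, the chain $q^*_{(1)}\supseteq q^*_{(2)}\supseteq\cdots$ is determined by $x^*$ alone and does not depend on the random samples. I will show that, for each $m$, $q^*_{(m)}$ survives the elimination step with probability at least $1-\epsilon$, that these events are independent across $m$, and then multiply.

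Two elementary observations do the reduction. First, every noisy observation satisfies $y_{q,i}=f(x_{q,i})+W_i\le f^*+1$ since $f(x_{q,i})\le f^*$ and $W_i\le1$; hence $Y_m^{\max}\le f^*+1$, so $q^*_{(m)}$ is kept in $\mathcal A_m'$ as soon as $Y_{q^*_{(m)},n_m}\ge f^*+1-r_m$, i.e.\ as soon as one of its $n_m$ samples has $f(x_{q^*_{(m)},i})+W_i\ge f^*+1-r_m$. Second, because $f$ is $1$-Lipschitz, $q^*_{(m)}$ has edge length $r_m$, and $x^*\in q^*_{(m)}$, we have $f^{\max}_{q^*_{(m)}}=f^*$ and $\delta_m:=f^{\max}_{q^*_{(m)}}-f^{\min}_{q^*_{(m)}}\le r_m$.

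The crux is to lower bound the per‑sample success probability $p:=\Pr\!\big(f(X_{q^*_{(m)}})+W\ge f^*+1-r_m\big)$. I split the slack as $r_m=\alpha r_m+(1-\alpha)r_m$ and use independence of the sample location and the noise: if $f(X_{q^*_{(m)}})\ge f^*-\alpha r_m$ and $W\ge 1-(1-\alpha)r_m$ both hold, the target event holds. Applying the scattering bound (Definition~\ref{def:scattering-simple}) to the cube $q^*_{(m)}$, which contains $x^*$, together with $\delta_m\le r_m$, gives $\Pr(f(X_{q^*_{(m)}})\ge f^*-\alpha r_m)\ge\kappa_s\alpha^{d_s}$; and since the density of $W$ is at least $\kappa_p$ on $[-1,1]$ (Assumption~\ref{assumption:noise}), $\Pr(W\ge 1-(1-\alpha)r_m)\ge\kappa_p(1-\alpha)r_m$. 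Multiplying and maximizing $\kappa_s\kappa_p\,\alpha^{d_s}(1-\alpha)\,r_m$ over $\alpha\in(0,1)$ at $\alpha=d_s/(d_s+1)$ yields $p\ge\frac{\kappa_s\kappa_p}{d_s+1}\big(\tfrac{d_s}{d_s+1}\big)^{d_s}r_m$, which, using $\big(\tfrac{d_s}{d_s+1}\big)^{d_s}\ge e^{-1}$, is at least the quantity $\frac{\kappa_s}{\sqrt{2\pi}(d_s+1)}e^{-1/2}r_m$ hard‑wired into $n_m$. Hence the probability that all $n_m$ i.i.d.\ samples in $q^*_{(m)}$ fail is at most $(1-p)^{n_m}\le\epsilon$ by the definition of $n_m$.

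Finally, let $E_m$ be the event that at least one batch‑$m$ sample in $q^*_{(m)}$ is ``good''. The $E_m$ depend only on the fresh batch‑$m$ samples and noise, hence are independent, each of probability $\ge1-\epsilon$, and $\bigcap_m E_m$ forces $x^*$ never to be eliminated (a batch that is not executed eliminates nothing, and an executed batch $m$ for which $E_1,\dots,E_m$ hold keeps $q^*_{(m)}$ active). Since the number of batches $M$ is at most $T$ (indeed $M=O(\log T)$ for $r_m=2^{-m}$), $\Pr(x^*\text{ not eliminated})\ge\prod_{m=1}^{M}\Pr(E_m)\ge(1-\epsilon)^{M}\ge(1-\epsilon)^{T}=(1-1/T^2)^{T}$. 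The step I expect to be most delicate is the per‑sample bound: choosing the split of the slack $r_m$ between the function value and the noise, invoking the scattering dimension correctly for an arbitrary sub‑cube containing $x^*$, and checking the resulting lower bound on $p$ dominates the constant used in $n_m$; the remaining ingredients (boundedness of observations, the deterministic cube chain, independence across batches) are routine.
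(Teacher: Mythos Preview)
Your overall strategy matches the paper's: bound the per-sample probability that a draw in $q^*_{(m)}$ satisfies $f(x)+W\ge f^*+1-r_m$, use the choice of $n_m$ to convert this into a per-batch survival probability $\ge1-\epsilon$, and then chain across at most $T$ batches. The paper chains via $\Pr(\mathcal E_{m+1}\mid\mathcal E_m)\ge1-\epsilon$; your product-of-independent-events formulation, made legitimate by observing that the dyadic chain $q^*_{(1)}\supseteq q^*_{(2)}\supseteq\cdots$ is a deterministic function of $x^*$, is an equivalent repackaging. Where the arguments genuinely differ is the per-sample bound: the paper invokes the integral computation behind (\ref{eq:noisy-mos-bound}) (with $L\Theta$ replaced by $r_m$ and the noise by $(1-W)/r_m$) to obtain $p\ge\frac{\kappa_s\kappa_p}{d_s+1}\,r_m$, whereas your split-the-slack/product bound yields $p\ge\frac{\kappa_s\kappa_p}{d_s+1}\bigl(\tfrac{d_s}{d_s+1}\bigr)^{d_s}r_m$, losing an extra factor bounded below by $e^{-1}$. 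Both routes are valid; the paper's is sharper.

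The actual gap is the sentence ``is at least the quantity $\frac{\kappa_s}{\sqrt{2\pi}(d_s+1)}e^{-1/2}r_m$ hard-wired into $n_m$.'' That comparison would require $\kappa_p\,e^{-1}\ge e^{-1/2}/\sqrt{2\pi}$, i.e.\ $\kappa_p\ge\sqrt{e/(2\pi)}\approx0.658$; but any density on $[-1,1]$ has average value $1/2$, so necessarily $\kappa_p\le1/2$ and the inequality can never hold. The paper does not in fact verify this comparison either: in its proof it silently uses $n_m=\log(1/\epsilon)/\log\!\bigl(1-\tfrac{\kappa_s\kappa_p}{d_s+1}r_m\bigr)$ rather than the $\kappa_p$-free formula printed in Algorithm~\ref{alg:large-d}. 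So the mismatch is inherited from an inconsistency in the paper, but you should not assert the inequality as you do. If you take $n_m$ as the paper's proof does (with the $\kappa_p$-dependent constant), your argument is complete, with a constant weaker than the paper's by at most a factor~$e$.
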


\begin{lemma} 
    \label{lem:event}
    Instate Assumptions \ref{assumption:measure} and \ref{assumption:noise}. Consider a BLiN-MOS run with a fixed time horizon $T$ and with $\epsilon = \frac{1}{T^2}$. Let 
    \begin{align*} 
        \mathcal{E} = \left\{ \text{$x^*$ is not eliminated during a $T$-step run } \right\} \cap \left\{ f^* + 1 - Y_{m}^{\max} \le r_m, \; \forall m = 1,2,\cdots, M \right\} ,
    \end{align*} 
    where $ f^* = \max_{x \in [0,1]^d} f (x) $. 
    Then $\Pr \( \mathcal{E} \) \ge 1 - \frac{2}{T} $. 
\end{lemma}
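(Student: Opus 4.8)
The plan is to split the bad event $\mathcal{E}^c$ into (i) the event that $x^*$ is eliminated, and (ii), on the complement of (i), a per-batch failure to obtain a near-optimal maximal observation in the cube containing $x^*$, and then union bound. Write $E_{\mathrm{keep}}$ for the event that $x^*$ is not eliminated during the $T$-step run. By Lemma~\ref{lem:not-eli}, $\Pr(E_{\mathrm{keep}}) \ge (1-1/T^2)^T \ge 1 - 1/T$, the last step being Bernoulli's inequality. It then remains to control the second conjunct defining $\mathcal{E}$ on $E_{\mathrm{keep}}$.

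Fix a batch $m$ and let $q_m^*$ be the dyadic cube of edge-length $r_m$ containing $x^*$. On $E_{\mathrm{keep}}$ we have $q_m^* \in \mathcal{A}_m$, so Algorithm~\ref{alg:large-d} draws $n_m$ i.i.d. uniform samples from $q_m^*$, with maximal noisy value $Y_{q_m^*, n_m}$. Since $f \le f^*$ and $W \le 1$, every noisy sample is at most $f^*+1$, so $Y_m^{\max} \le f^*+1$ deterministically; also $Y_m^{\max} \ge Y_{q_m^*, n_m}$. Hence it suffices to show that, except on a small-probability event, $Y_{q_m^*, n_m} \ge f^*+1-r_m$ for every executed batch $m$.

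For a single uniform sample $X$ in $q_m^*$ with noise $W$, I bound $p_m := \Pr\big(f(X)+W < f^*+1-r_m\big)$. Because $x^* \in q_m^*$ we have $f^{\max}_{q_m^*} = f^*$, and because $q_m^*$ has $\|\cdot\|_\infty$-diameter $r_m$ while $f$ is $1$-Lipschitz, $f^{\max}_{q_m^*} - f^{\min}_{q_m^*} \le r_m$. By Definition~\ref{def:scattering-simple} (legitimate under Assumption~\ref{assumption:measure}), for any $\alpha \in (0,1]$,
\begin{align*}
\Pr\big(f(X) \ge f^* - \alpha r_m\big) \ge \Pr\big(f(X) \ge f^{\max}_{q_m^*} - \alpha(f^{\max}_{q_m^*} - f^{\min}_{q_m^*})\big) \ge \kappa_s\,\alpha^{d_s},
\end{align*}
while, since the density of $W$ is at least $\kappa_p$ on $[-1,1]$, $\Pr\big(W \ge 1 - (1-\alpha) r_m\big) \ge \kappa_p (1-\alpha) r_m$. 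On the intersection of these two independent events $f(X)+W \ge f^*+1-r_m$, so $1 - p_m \ge \kappa_s \kappa_p\, \alpha^{d_s}(1-\alpha)\, r_m$. Maximizing $\alpha^{d_s}(1-\alpha)$ at $\alpha = d_s/(d_s+1)$ and using $(d_s/(d_s+1))^{d_s} \ge e^{-1}$ gives $1 - p_m \ge c_0\,\kappa_s\kappa_p\, r_m/(d_s+1)$ for an absolute constant $c_0>0$ — precisely the form for which $n_m$ in Algorithm~\ref{alg:large-d} is calibrated. Consequently, by independence of the $n_m$ samples in $q_m^*$ (conditional on $q_m^* \in \mathcal{A}_m$, which fixes whether the samples are taken but not their law), $\Pr\big(Y_{q_m^*, n_m} < f^*+1-r_m,\ E_{\mathrm{keep}}\big) \le p_m^{n_m} \le \epsilon = 1/T^2$.

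Putting these together and union bounding over the (at most $M$, and in fact $O(\log T)$, since $r_m=2^{-m}$ and the run halts once $t_{m+1}\ge T$) executed batches,
\begin{align*}
\Pr(\mathcal{E}^c) \le \Pr(E_{\mathrm{keep}}^c) + \sum_{m=1}^{M} \Pr\big(E_{\mathrm{keep}} \cap \{Y_{q_m^*, n_m} < f^*+1-r_m\}\big) \le \frac{1}{T} + \frac{M}{T^2} \le \frac{2}{T}.
\end{align*}
The main delicate point is the independence exploited above: $\mathcal{A}_m$ is history-dependent, so one must first condition on the history (which determines whether $q_m^* \in \mathcal{A}_m$) and only then use that the round-$m$ samples in each surviving cube are drawn i.i.d. uniformly and independently of that history, before invoking the scattering-dimension and noise-density estimates. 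A secondary, purely bookkeeping, point is matching the absolute constant coming from the optimization $\alpha \mapsto \alpha^{d_s}(1-\alpha)$ to the constant $\tfrac{\kappa_s}{\sqrt{2\pi}(d_s+1)}e^{-1/2}$ appearing in the definition of $n_m$.
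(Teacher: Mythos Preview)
Your proposal is correct and follows the same high-level skeleton as the paper: invoke Lemma~\ref{lem:not-eli} to control $E_{\mathrm{keep}}^c$, then on $E_{\mathrm{keep}}$ show that $Y_{q_m^*,n_m}\ge f^*+1-r_m$ with failure probability $\le\epsilon$ per batch, use $Y_m^{\max}\ge Y_{q_m^*,n_m}$, and union bound over batches.

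The one place you and the paper differ is in how the single-sample tail $p_m=\Pr(f(X)+W<f^*+1-r_m)$ is controlled. The paper recycles the convolution computation leading to (\ref{eq:noisy-mos-bound}): writing $Z=(1-W)/r_m$ and integrating $\Pr\big(\tfrac{f^*-f(X)}{r_m}>\beta-z\big)f_Z(z)\,dz$ against the scattering bound yields $1-p_m\ge \tfrac{\kappa_s\kappa_p}{d_s+1}r_m$ directly. You instead take a discrete ``two-event intersection'' route, splitting the target gap as $\alpha r_m$ for $f$ and $(1-\alpha)r_m$ for $W$, multiplying the independent lower bounds, and optimizing over $\alpha$. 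Your argument is more elementary (no integral), at the cost of a constant that is a factor $\approx e$ worse than the paper's, which you correctly flag as the bookkeeping issue of matching the constant in the algorithm's definition of $n_m$. Either route closes the lemma once that constant is reconciled; your remark about conditioning on the history before invoking i.i.d.\ sampling in round $m$ is also a point the paper leaves implicit.
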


\begin{lemma}
    \label{lem:opt-gap} 
    Instate Assumptions \ref{assumption:measure} and \ref{assumption:noise}. Let $\Delta_x := f^* - f (x)$ denote the optimality gap of arm $x$. 
    Under event $\mathcal{E}$, for any $m = 1,2,\cdots,M$, any $q_m \in \mathcal{A}_m$ and any $x\in q_m$, $\Delta_x$ satisfies
    $$
    \Delta_x \leq 3 r_{m-1}. 
    $$
\end{lemma}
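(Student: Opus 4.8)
The plan is to peel back one batch at a time, relating a surviving cube in $\mathcal{A}_m$ to its parent cube in $\mathcal{A}_{m-1}'$, and to handle $m=1$ separately by the trivial Lipschitz bound. For the base case, since $f$ is $1$-Lipschitz and $([0,1]^d,\|\cdot\|_\infty)$ has diameter $1$, adopting the convention $r_0 = 2^0 = 1$ we get $\Delta_x = f^* - f(x) \le 1 = r_0 \le 3 r_0$ for every $x$, so the claim holds for $m=1$ and every cube in $\mathcal{A}_1$.

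For $m \ge 2$, fix $q_m \in \mathcal{A}_m$. Since $\mathcal{A}_m$ is formed by dyadically partitioning the cubes of $\mathcal{A}_{m-1}'$, there is a parent cube $p \in \mathcal{A}_{m-1}' \subseteq \mathcal{A}_{m-1}$ with $q_m \subseteq p$; this $p$ has edge length $r_{m-1}$ and hence $\|\cdot\|_\infty$-diameter $r_{m-1}$. Because $p$ passed the elimination step of round $m-1$, we have $Y_{m-1}^{\max} - Y_{p,n_{m-1}} \le r_{m-1}$.

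The next step is to convert the order statistic $Y_{p,n_{m-1}}$ into information about $f$ on $p$. Every noisy sample drawn from $p$ in round $m-1$ has the form $f(x_{p,i}) + W_i$ with $x_{p,i} \in p$ and $W_i \le 1$ (Assumption \ref{assumption:noise}), so $Y_{p,n_{m-1}} \le f_p^{\max} + 1$, where $f_p^{\max} := \max_{y \in p} f(y)$ is attained by compactness of $p$ and continuity of $f$. On the event $\mathcal{E}$ we also have $Y_{m-1}^{\max} \ge f^* + 1 - r_{m-1}$. Chaining the three inequalities gives
\begin{equation*}
f^* + 1 - r_{m-1} \;\le\; Y_{m-1}^{\max} \;\le\; Y_{p,n_{m-1}} + r_{m-1} \;\le\; f_p^{\max} + 1 + r_{m-1},
\end{equation*}
hence $f_p^{\max} \ge f^* - 2 r_{m-1}$. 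Finally, picking $y_p \in p$ with $f(y_p) = f_p^{\max}$, for any $x \in q_m \subseteq p$ we have $\|x - y_p\|_\infty \le r_{m-1}$, so by $1$-Lipschitzness $f(x) \ge f(y_p) - r_{m-1} \ge f^* - 3 r_{m-1}$, which is exactly $\Delta_x \le 3 r_{m-1}$.

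I do not anticipate a genuine obstacle: the points requiring care are purely bookkeeping — that the parent cube lives in $\mathcal{A}_{m-1}'$ with edge length $r_{m-1}$ (rather than in $\mathcal{A}_m$ with edge length $r_m$), fixing the convention $r_0 = 1$ for the base case, and noting attainment of $f_p^{\max}$. The only conceptual content is that the elimination slack ($r_{m-1}$ measured against $Y_{m-1}^{\max}$), the control of $Y_{m-1}^{\max}$ against $f^*+1$ coming from $\mathcal{E}$ (another $r_{m-1}$), and a single Lipschitz step across the parent cube (a third $r_{m-1}$) add up to the factor $3$.
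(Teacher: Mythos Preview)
Your proof is correct and follows essentially the same approach as the paper: pass to the parent cube $Par(q_m)\in\mathcal{A}_{m-1}$, combine the elimination inequality $Y_{m-1}^{\max}-Y_{Par(q_m),n_{m-1}}\le r_{m-1}$ with the event-$\mathcal{E}$ bound $f^*+1-Y_{m-1}^{\max}\le r_{m-1}$ and a Lipschitz/noise step worth $r_{m-1}+1$, yielding $3r_{m-1}$. Your version is slightly more careful in that you treat the base case $m=1$ explicitly (via the convention $r_0=1$), whereas the paper's write-up invokes $Par(q_m)\in\mathcal{A}_{m-1}$ without separately addressing $m=1$.
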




\begin{proof}[Proof of Theorem \ref{thm:mos-noise-total}]  
    Let $\mathcal{E}$ in Lemma \ref{lem:event} be true. 
    By Lemma \ref{lem:opt-gap} and the definition of zooming dimension (Definition \ref{sec:zooming}), we know that $| \mathcal{A}_m | \le \kappa_z 2^{m d_z}$ and any cube $q_m \in \mathcal{A}_m$ is contained in $S(3 r_{m-1}) = S(6 r_m)$. 

    
    As we play in total $M$ batches, it holds that
    $
        R(T) \leq \sum_{m=1}^{M} | \mathcal{A}_m | \cdot n_m \cdot 6 r_m
        + 6 r_m \cdot T. $
    Since $ | \mathcal{A}_m | \le \kappa_z 2^{m d_z}$ and $n_m \le \frac{(d_s+1) \log (1/\epsilon)}{\kappa_s \kappa_p } 2^{ m} $ (See Proposition \ref{prop:basic} in the Appendix), the above implies 
    \begin{align*} 
        R(T) & \leq \frac{12 \kappa_z (d_s+1) \log T}{\kappa_s \kappa_p} \sum_{m = 1}^{M}  2^{ d_z  m} + 6 r_{M} \cdot T \\
        & \leq \frac{12 \kappa_z (d_s+1) \log T}{\kappa_s \kappa_p} \cdot \frac{2^{ d_z ( M + 1 ) }}{2^{d_z } - 1} + 6 \cdot 2^{-M} \cdot T. 
    \end{align*} 

    The above derivation is true for any choice of $M$. 
    Note that the right side of the inequality above reaches its minimum when $M = \frac{\log \left( \frac{\kappa_s \kappa_p T (2^{d_z} - 1)}{\kappa_z d_z (d_s+1) \cdot 2^{d_z + 1} \log T} \right) }{(d_z+1) \log 2} $.
    We thus obtain
    \begin{equation*} 
        R(T) \leq \frac{12 (d_z + 1)}{d_z} \cdot \left( \frac{ \kappa_z d_z (d_s+1)}{\kappa_s \kappa_p (2^{d_z} - 1)} \right)^{\frac{1}{d_z + 1}} \cdot T^{\frac{d_z }{d_z + 1}} \cdot \left( \log T \right)^{\frac{1}{d_z +1}}. 
    \end{equation*} 
\end{proof}

\subsection{BLiN-MOS with Improved Communication Complexity} 

Recently, bandit problems with batched feedback has attracted the attention of many researchers (e.g., \cite{perchet2016batched,gao2019batched,han2020sequential,ruan2021linear,li2022gaussian,agarwal2022batched,fenglipschitz}). In such settings, the reward samples are not communicated to the player after each arm pull. Instead, the reward samples are collected in batches. In such settings, we not only want to minimize regret, but also want to minimize rounds of communications. In this section, we show that BLiN-MOS can achieve regret rate of order $ \mathcal{O} \( T^{\frac{d_z}{d_z+1}} \( \log (T) \)^{\frac{1}{d_z + 1}} \) $ with $\mathcal{O} \( \log \log T \)$ rounds of communications. 
A formal statement of this result is below in Theorem \ref{thm:ace-noise-total}. 

\begin{theorem}
    \label{thm:ace-noise-total}
    Let $T$ be the total time horizon and let $\epsilon = \frac{1}{T^2}$. Apply the sequence $\{r_m\}$ in Definition \ref{def:ace-noise} to Algorithm \ref{alg:large-d}. 
    Then it holds that 
    1. 
    with probability exceeding $1 - \frac{2}{T}$, the total regret of BLiN-MOS algorithm satisfies 
    \begin{align*}
        R(T) 
        \leq& \;  
        \Bigg[ c \cdot \left( \frac{\log \log \frac{T}{\log T} - \log(d_z +1) - \log \widetilde{C}}{\log \frac{d+1}{d+1-d_z}} +1 \right) 
        + 6 \cdot 2^{\widetilde{C}}  \Bigg] \cdot T^{\frac{d_z}{d_z+1} } \cdot \left( \log T\right)^{\frac{1}{d_z + 1}} 
    \end{align*}
    where $c$ is a constant independent of $T$, $\widetilde{C} $ is a constant satisfying $\widetilde{C} \geq \frac{d+1-d_z}{d_z}$, $d_z$ is the zooming dimension; and 2. BLiN-MOS only needs $ \mathcal{O}(\log\log T)$ rounds of communications to achieve this regret rate. 
\end{theorem}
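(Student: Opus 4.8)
The plan is to re-run the proof of Theorem~\ref{thm:mos-noise-total} almost verbatim, replacing the dyadic schedule $r_m = 2^{-m}$ by the edge-length schedule $\{r_m\}$ of Definition~\ref{def:ace-noise}. The first point is that nothing in Lemmas~\ref{lem:not-eli}, \ref{lem:event} and \ref{lem:opt-gap} depends on which sequence $\{r_m\}$ is used: their proofs only invoke the relation between $n_m$ and $r_m$ fixed in Line~4 of Algorithm~\ref{alg:large-d}, together with Assumptions~\ref{assumption:measure} and \ref{assumption:noise}. Hence, still with $\epsilon = 1/T^2$, the event $\mathcal{E} = \{x^*\text{ survives all batches}\}\cap\{f^* + 1 - Y_m^{\max}\le r_m\ \forall m\}$ has probability at least $1 - 2/T$; under $\mathcal{E}$, Lemma~\ref{lem:opt-gap} gives $\Delta_x \le 3 r_{m-1}$ for every $x$ in a cube of $\mathcal{A}_m$, hence $|\mathcal{A}_m| \le \kappa_z (3 r_{m-1})^{-d_z}$ by the definition of the zooming dimension, and $n_m \lesssim \frac{(d_s+1)\log T}{\kappa_s\kappa_p}\,r_m^{-1}$ by Proposition~\ref{prop:basic}. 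I would therefore start from the same decomposition, valid under $\mathcal{E}$: $R(T)\le \sum_{m=1}^{M}|\mathcal{A}_m|\,n_m\cdot 6 r_m \;+\; 6 r_M\,T$, where the second term bounds the regret of the pulls executed ``arbitrarily'' in the final, possibly incomplete, batch.

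The substance is then to establish, for the schedule of Definition~\ref{def:ace-noise}, the following three facts and plug them in. (i) The cumulative planned horizon $t_{m+1}$ grows so fast that $\log t_{m+1}$ increases geometrically in $m$ (doubly-exponential growth of $t_m$), so $t_{M+1}\ge T$ is reached at some $M = O(\log\log T)$. (ii) The terminal edge length satisfies $r_M = \Theta\big((\log T/T)^{1/(d_z+1)}\big)$, up to the constant factor $2^{\widetilde C}$ built into the schedule, so the tail term obeys $6 r_M T = O\big(2^{\widetilde C}\,T^{d_z/(d_z+1)}(\log T)^{1/(d_z+1)}\big)$. (iii) Each batch contributes $O\big(T^{d_z/(d_z+1)}(\log T)^{1/(d_z+1)}\big)$ to $\sum_m |\mathcal{A}_m| n_m r_m \asymp \frac{\kappa_z(d_s+1)\log T}{\kappa_s\kappa_p}\sum_m r_{m-1}^{-d_z}$, which holds because the exponents of $\{r_m\}$ increase geometrically, so that the partial sums of $r_m^{-d_z}$ are, batch by batch, of the target order (this uses $r_M$ from (ii)). Summing (iii) over the $M$ batches and adding (ii) produces a bound of the shape $\big[\,c\,M + 6\cdot 2^{\widetilde C}\,\big]\,T^{d_z/(d_z+1)}(\log T)^{1/(d_z+1)}$.

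It then remains to evaluate $M$. Solving the recursion defining $\{r_m\}$ in Definition~\ref{def:ace-noise} --- where, crucially, the schedule is ``confined'' relative to the horizon $T$ using only the ambient dimension $d$, the one quantity the algorithm knows, rather than the zooming dimension $d_z$ --- shows that the exponents grow by the factor $\tfrac{d+1}{d+1-d_z}$ per step and that $t_{M+1}\ge T$ first happens at $M = \frac{\log\log\frac{T}{\log T} - \log(d_z+1) - \log\widetilde C}{\log\frac{d+1}{d+1-d_z}} + O(1)$, which is the bracketed factor in the statement; the hypothesis $\widetilde C \ge \frac{d+1-d_z}{d_z}$ sets the coarseness of the opening batches and is exactly what is needed for the first few $r_m$ to be well-defined and $\le 1$. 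Finally, for the communication count: BLiN-MOS communicates only at the $M$ batch boundaries, so the number of communication rounds is $O(\log\log T)$, which is claim~2.

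I expect the main obstacle to be reconciling facts (i)--(iii) for a schedule that is blind to $d_z$: one must show that a sequence designed purely from $d$ and $T$ --- and confined so as never to over-plan past the horizon --- nevertheless terminates with $r_M\approx(\log T/T)^{1/(d_z+1)}$ and keeps every batch's regret at the $\widetilde{\mathcal{O}}(T^{d_z/(d_z+1)})$ level, despite the $(r_{m-1}/r_m)^d$ re-partitioning blow-up in the time bookkeeping and despite $|\mathcal{A}_m'|$ being a random quantity. The ``designed with $d$, analyzed with $d_z$'' tension is precisely where the growth factor $\tfrac{d+1}{d+1-d_z}$ (and hence the $\log\log T$ with that base) enters, and tracking the constants through it --- in particular the role of the free parameter $\widetilde C$ --- is the delicate part.
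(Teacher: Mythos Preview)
Your overall plan --- reuse Lemmas~\ref{lem:not-eli}--\ref{lem:opt-gap}, split the regret as $\sum_m |\mathcal{A}_m|\,n_m\cdot O(r_{m-1}) + O(r_M)\,T$, then count batches --- matches the paper. But there is a genuine gap in how you bound $|\mathcal{A}_m|$, and it is exactly the point where the schedule of Definition~\ref{def:ace-noise} differs from the dyadic one.

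You write $|\mathcal{A}_m|\le \kappa_z(3r_{m-1})^{-d_z}$. This does not follow from the zooming-number definition: $N_r$ counts cubes of edge length $r$ contained in $S(6r)$, whereas the cubes of $\mathcal{A}_m$ have edge length $r_m$ and are only known to lie in $S(3r_{m-1})$. When $r_{m-1}\gg r_m$ (which is precisely the regime created by this schedule on odd steps), $S(3r_{m-1})$ is much larger than $S(6r_m)$ and your bound fails. The correct step is to bound the \emph{parent} collection, $|\mathcal{A}_{m-1}'|\le N_{r_{m-1}}\le \kappa_z r_{m-1}^{-d_z}$, and then pay the repartitioning factor: $|\mathcal{A}_m|=(r_{m-1}/r_m)^d|\mathcal{A}_{m-1}'|$. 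This turns the per-batch regret into $\asymp (\log T)\,r_m^{-(d+1)}\,r_{m-1}^{\,d+1-d_z}$, not $(\log T)\,r_{m-1}^{-d_z}$ as your item~(iii) asserts. The sequence $c_{i+1}=\eta c_i$ with $\eta=\frac{d+1-d_z}{d+1}$ is engineered exactly so that $(d+1)c_n + d_z\sum_{i<n}c_i$ is constant in $n$; this is what makes every odd batch contribute the same $\widetilde O(T^{d_z/(d_z+1)})$ amount. Your computation never produces this invariant because it drops the $(r_{m-1}/r_m)^d$ factor from the regret (you flag it only as a ``time-bookkeeping'' issue, but it enters $|\mathcal{A}_m|$ and hence the regret directly).

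Relatedly, you treat all batches uniformly, but Definition~\ref{def:ace-noise} deliberately alternates: the odd steps $r_{2n-1}=2^{-a_n}$ make the large jumps, while the even steps $r_{2n}=2^{-b_n}$ satisfy $r_{2n-1}=2r_{2n}$, so that on even $m$ the simple bound $|\mathcal{A}_m|\le \kappa_z r_m^{-d_z}$ \emph{is} valid. The paper's proof splits into these two cases; the odd case uses the invariant above, and the even case sums geometrically. Without this two-case analysis (or an equivalent), your item~(iii) does not go through.
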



\begin{definition} 
    \label{def:ace-noise} 
    Denote $c_1 = \frac{d_z  }{ (d+1)  (d_z+1)  }\log \frac{T}{\log T}$, $ c_{i+1} = \eta c_i,i\geq 1$, where $\eta = \frac{d+1-d_z}{d + 1 }$. Let $a_n = \lfloor \sum_{i=1}^n c_i\rfloor$, $b_n = \lceil \sum_{i=1}^n c_i\rceil$. Define sequence $\{r_m\}_m $ as $r_m = \min \{ r_{m-1}, 2^{-a_n} \}$ for $m = 2n-1$ and $r_m = 2^{-b_n}$ for $m = 2n$. 
\end{definition} 

In words, Theorem \ref{thm:ace-noise-total} states that only $ \mathcal{O}(\log\log T) $ rounds of communications are needed for BLiN-MOS to achieve a regret of order $ \wt{\mathcal{O}} \( T^{ \frac{d_z}{d_z + 1} } \) $. Next we present a proof of this Theorem. 

\begin{proof}[Proof of Theorem \ref{thm:ace-noise-total}]
    Denote $\tilde{r}_n = 2^{-\sum_{i=1}^n c_i}$. Let $\widehat{M} = \frac{\log \log \frac{T}{\log T} - \log(d_z + 1) - \log \widetilde{C}}{\log \frac{d+1}{d+1-d_z}}$, where $\widetilde{C}$ is a constant satisfying $\widetilde{C} \geq \frac{d+1-d_z}{d_z}$. Then $c_{\widehat{M}} = \eta^{\widehat{M} - 1}c_1 \geq 1$. From the fact that $\{c_i\}$ is decreasing, we know $c_i \geq 1, i = 1,\cdots, \widehat{M}$ and thus $b_1 < b_2 < \cdots < b_{\widehat{M}}$. 
    
    We divide the total regret into two parts, the total regret of the first $2\widehat{M}$ batches and the total regret after $2\widehat{M}$ batches. We first consider the first $2\widehat{M}$ batches.
    
    \textbf{Case I}: $m = 2n-1$. In this case, we have $r_m \geq \tilde{r}_n, r_{m-1} \leq \tilde{r}_{n-1}$. 
    Since $ \Delta_x \leq 3 r_{m-1}$ for any $x \in \cup_{q_m\in \mathcal{A}_m}q_m$, any cube in $\mathcal{A}_{m-1}'$ is a subset of $S(3r_{m-1})$. Therefore, 
    \begin{equation*}
        |\mathcal{A}_{m-1}'| \leq N_{r_{m-1}} \leq \kappa_z r_{m-1}^{-d_z}. 
    \end{equation*}
    After partitioning $\mathcal{A}_{m-1}'$ into $\mathcal{A}_m$, we get
    \begin{equation*}
        |\mathcal{A}_m| = \left(\frac{r_{m-1}}{r_m}\right)^d |\mathcal{A}_{m-1}'| \leq \left(\frac{r_{m-1}}{r_m}\right)^d \cdot \kappa_z r_{m-1}^{-d_z}.
    \end{equation*}
    Denote the total regret of the $m$-th batch as $R_m$, then $R_m$ can be upper bounded as follows,
    \begin{align}
        R_m =&\;
        \sum_{q_m\in \mathcal{A}_m} \sum_{i=1}^{n_m}\Delta_{x_{q_m,i}} \nonumber  \\ 
        \leq& \; 
        3 |\mathcal{A}_m| \cdot n_m \cdot r_{m-1} \nonumber \\
        \leq& \; 
        3 \kappa_z\left(\frac{r_{m-1}}{r_m}\right)^d \cdot r_{m-1}^{-d_z+1} \cdot \frac{\log (1/\epsilon) }{ \log \(1 - \frac{\kappa_s \kappa_p }{d_s + 1} r_m \beta^{d_s + 1} \) } , \label{eq:break}
    \end{align} 
    where the last inequality uses $ | \mathcal{A}_m | \le \kappa_z 2^{m d_z}$ and $n_m \le \frac{(d_s+1) \log (1/\epsilon)}{\kappa_s \kappa_p } 2^{ m} $ (See Proposition \ref{prop:basic} in the Appendix). We continue the above calculation, and obtain 
    \begin{align*}
        (\ref{eq:break})\leq& \; 
        \frac{3 \kappa_z (d_s+1) \log (1/\epsilon)}{\kappa_s \kappa_p} \cdot  r_m^{-d-1} \cdot r_{m-1}^{d-d_z + 1} \\ 
        \leq& \; 
        \frac{3 \kappa_z (d_s+1) \log (1/\epsilon)}{\kappa_s \kappa_p} \cdot \widetilde{r}_n^{-d-1} \cdot \widetilde{r}_{n-1}^{d-d_z+1}\\
        =& \; 
        \frac{3 \kappa_z (d_s+1) \log (1/\epsilon)}{\kappa_s \kappa_p} \cdot 2^{d_z \sum_{i=1}^{n-1}c_i + (d+1)c_n}.
    \end{align*}
    Define $C_n = d_z \sum_{i=1}^{n-1}c_i + (d+1)c_n$, then $C_n - C_{n-1} = (d_z-d-1)c_{n-1} + (d+1)c_n = 0$. So for any $n > 1$, we have $C_n = C_1 = (d+1)c_1$. Consequently, we obtain the upper bound of $R_m$ for $m = 2n-1$,
    \begin{equation*}
        R_m \leq \frac{3 \kappa_z (d_s+1) \log (1/\epsilon)}{\kappa_s \kappa_p} \cdot \left(\frac{T}{\log T}\right)^{\frac{d_z}{d_z+1}}.
    \end{equation*}
    Summing over odd $m$, we have
    \begin{align}
        & \; \sum_{n = 1}^{\widehat{M}} R_{2n-1} 
        \leq 
        \frac{3 \kappa_z (d_s+1) \log (1/\epsilon)}{\kappa_s \kappa_p} \cdot \left(\frac{T}{\log T}\right)^{\frac{d_z}{d_z+1}} \cdot \widehat{M} \nonumber \\
        &=
        \frac{3 \kappa_z (d_s+1) \log (1/\epsilon)}{\kappa_s \kappa_p} \cdot \left(\frac{T}{\log T}\right)^{\frac{d_z}{d_z+1}} \nonumber \\
        &\;\;\;\; \cdot \frac{\log \log \frac{T}{\log T} - \log(d_z +1) - \log \widetilde{C}}{\log \frac{d+1}{d+1-d_z}} \label{ineq:case1} 
    \end{align}

    \textbf{Case II}: $m = 2n$. Note that $r_{m-1} = \min\{r_{m-2},2^{-a_n} \} = r_{m-2}$ happens only when $\lceil \sum_{i=1}^{n-1}c_i \rceil > \lfloor \sum_{i=1}^{n}c_i \rfloor$. If this strict inequality holds, then $r_m = 2^{-\lceil \sum_{i=1}^{n}c_i \rceil } = 2^{-\lfloor \sum_{i=1}^{n}c_i \rfloor - 1} \geq 2^{-\lceil \sum_{i=1}^{n-1}c_i \rceil } = r_{m-2}$, which contradicts the fact that $\{b_n\}$ is strictly increasing for $n = 1,\cdots,\widehat{M}$. Therefore, it must hold that $r_{m-1} = 2^{-a_n} = 2^{-b_n + 1} = 2r_m$. We thus conclude from Lemma \ref{lem:opt-gap} that any cube in $\mathcal{A}_m$ is a subset of $S(6 r_m)$.  Therefore,
    \begin{align*}
        R_m 
        =&\; 
        \sum_{q_m\in \mathcal{A}_m} \sum_{i=1}^{n_m}\Delta_{x_{q_m,i}} 
        \leq 
        6 |\mathcal{A}_m| \cdot n_m \cdot r_m \\
        \leq&\; 6 \kappa_z \cdot r_m^{-d_z+1} \cdot \frac{(d_s+1) \log(1/\epsilon)}{\kappa_s \kappa_p r_m} \\
        =&\; \frac{6 \kappa_z (d_s+1) \log (1/\epsilon) }{\kappa_s \kappa_p} \cdot r_m^{-d_z}.
    \end{align*}

    Since $r_m \geq 2r_{m+2}$ when $m$ is even, we obtain
    \begin{equation*}
    \begin{aligned}
        && \sum_{n = 1}^{\widehat{M}} R_{2n} & \leq \sum_{n = 1}^{\widehat{M}} \frac{6 \kappa_z (d_s+1) \log (1/\epsilon) }{\kappa_s \kappa_p} \cdot r_{2n}^{-d_z} \\
        && & \leq \frac{6 \kappa_z (d_s+1) \log (1/\epsilon) }{\kappa_s \kappa_p} \cdot r_{2\widehat{M}}^{-d_z} \cdot \sum_{n=0}^{\widehat{M}-1} \left( \frac{1}{2^n} \right)^{d_z} \\
        && & \leq \frac{6 \kappa_z (d_s+1) \log (1/\epsilon) }{\kappa_s \kappa_p} \cdot \frac{1}{1 - \left(\frac{1}{2}\right)^{d_z}} \cdot r_{2\widehat{M}}^{-d_z}
    \end{aligned}
    \end{equation*}
    Note that $\widehat{M} = \frac{\log \log \frac{T}{\log T} - \log(d_z +1) - \log \widetilde{C}}{\log \frac{d+1}{d+1-d_z}}$, we have $r_{2\widehat{M}} = 2^{-\lceil \sum_{i=1}^{\widehat{M}} c_i \rceil}  = 2^{- \lceil c_1 \cdot \frac{1-\eta^{\widehat{M}}}{1-\eta} \rceil } = 2^{-\lceil \frac{\log \frac{T}{\log T}}{d_z + 1} -\widetilde{C} \rceil} \geq \left(\frac{T}{\log T} \right)^{-\frac{1}{d_z + 1}}$. Therefore, it holds that 
    \begin{equation}
    \label{ineq:case2}
        \sum_{n = 1}^{\widehat{M}} R_{2n} \leq \frac{6 \kappa_z (d_s+1) \log (1/\epsilon) }{\kappa_s \kappa_p \( 1 - \left(\frac{1}{2}\right)^{d_z} \) } \cdot \left(\frac{T}{\log T} \right)^{\frac{d_z}{d_z + 1}}.
    \end{equation}

    Additionally, for the total regret after $2\widehat{M}$ batches, $r_{2\widehat{M}} = 2^{-\lceil \frac{\log \frac{T}{\log T}}{d_z + 1} -\widetilde{C} \rceil} \leq 2^{- \frac{\log \frac{T}{\log T}}{d_z + 1} + \widetilde{C} }$ gives
    \begin{align}
    \label{ineq:case3}
        \sum_{m > 2\widehat{M}} R_m & \leq 6 r_{2\widehat{M}} \cdot T 
        \leq 6 \cdot 2^{\widetilde{C}}  \left(\frac{T}{\log T} \right)^{-\frac{1}{d_z + 1}}  T.
    \end{align}

    By replacing $\epsilon$ with $\frac{1}{T^2}$ and combining inequalities (\ref{ineq:case1}), (\ref{ineq:case2}), and (\ref{ineq:case3}), we obtain
    \begin{align*}
        R(T) 
        \leq& \;  
        \sum_{n = 1}^{\widehat{M}} R_{2n-1} + \sum_{n = 1}^{\widehat{M}} R_{2n} + \sum_{m > 2\widehat{M}} R_m\\
        \leq&\; 
        \Bigg[ c \cdot \left( \frac{\log \log \frac{T}{\log T} - \log(d_z +1) - \log \widetilde{C}}{\log \frac{d+1}{d+1-d_z}} +1 \right) 
        + 6 \cdot 2^{\widetilde{C}}  \Bigg] \cdot T^{\frac{d_z}{d_z+1} } \cdot \left( \log T\right)^{\frac{1}{d_z + 1}},
    \end{align*}
    where $c$ is a constant independent of $T$. 
    Finally, we conclude that the BLiN-MOS uses $\mathcal{O} \( \log \log T\)$ rounds of communications by noticing that $ \wh{M} = \mathcal{O} \( \log \log T\) $. 

\end{proof}

\section{Conclusion}

In this paper, we introduce the concept of scattering dimension as a tool for quantifying the performance of the random search algorithm \cite{bergstra2011algorithms,bergstra2012random}. Our results provide the first theoretical grounding for the random search algorithm. We study the properties of scattering dimension and discuss its connection to the classic concept of zooming dimension. Following this, we investigate the problem of continuum-armed spaces with bounded and positively supported noise. 
For such problems, we introduce a new bandit algorithm, BLiN-MOS, that achieves regret rate of order $\wt{\O } \( T^{\frac{d_z}{d_z+1}} \)$ in compact doubling metric spaces with a probability measure, where $d_z$ is the zooming dimension.




\appendix 




\clearpage

\section{Omitted Proofs}

\begin{proposition}
    \label{prop:basic}
    Let $\kappa \in (0,1]$ and $\gamma > 1$. 
    For any $ x \in [1,\infty) $, it holds that 
    \begin{align*}
        - \frac{\gamma^{x}}{\gamma \log (1 - \frac{\kappa}{\gamma}) } \le - \frac{1}{\log(1 - \kappa \gamma^{-x})} \le \frac{\gamma^{x}}{\kappa}. 
    \end{align*} 

\end{proposition}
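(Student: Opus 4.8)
The plan is to treat the two inequalities separately, after first noting that all three quantities are genuinely positive: since $x \ge 1$ and $\gamma > 1$ we have $\kappa \gamma^{-x} \in (0,1)$ and $\kappa/\gamma \in (0,1)$, so both $\log(1 - \kappa\gamma^{-x})$ and $\log(1 - \kappa/\gamma)$ are strictly negative, and the middle and outer expressions are positive after the sign flip.

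For the upper bound I would use the elementary inequality $\log(1-u) \le -u$, valid for $u \in [0,1)$ (equivalently $1-u \le e^{-u}$). Applying it with $u = \kappa\gamma^{-x}$ gives $-\log(1 - \kappa\gamma^{-x}) \ge \kappa\gamma^{-x}$, and taking reciprocals of these positive numbers yields $-\dfrac{1}{\log(1-\kappa\gamma^{-x})} \le \dfrac{\gamma^x}{\kappa}$, which is the right-hand inequality.

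For the lower bound, substitute $t := \gamma^{-x}$, so that $t \in (0, \gamma^{-1}]$ because $x \ge 1$. The claimed inequality is equivalent (after clearing the positive denominators) to
\[
\frac{-\log(1 - \kappa t)}{t} \;\le\; \gamma \cdot \bigl(-\log(1 - \kappa/\gamma)\bigr) \;=\; \frac{-\log(1-\kappa\gamma^{-1})}{\gamma^{-1}}.
\]
So it suffices to show the function $h(t) := \dfrac{-\log(1-\kappa t)}{t}$ is nondecreasing on $(0, 1/\kappa)$; then, since $\kappa \le 1$ forces $\gamma^{-1} < 1 \le 1/\kappa$, monotonicity gives $h(t) \le h(\gamma^{-1})$ for all $t \le \gamma^{-1}$. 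Monotonicity of $h$ is immediate from the power-series expansion $-\log(1-\kappa t) = \sum_{n \ge 1} \frac{(\kappa t)^n}{n}$, which gives $h(t) = \sum_{n\ge 1} \frac{\kappa^n}{n}\, t^{\,n-1}$, a series with nonnegative coefficients in $t$; alternatively one can check $h'(t) \ge 0$ directly via $\frac{d}{dt}\log h$ or by the inequality $\frac{\kappa t}{1-\kappa t} \ge -\log(1-\kappa t)$.

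The only mildly delicate point is the bookkeeping of signs when passing to reciprocals and clearing denominators, together with verifying that the substitution stays inside the domain where $h$ is defined (this is exactly where the hypothesis $\kappa \le 1$ is used, to ensure $\gamma^{-1} < 1/\kappa$); the monotonicity itself is routine once the series representation is written down. I expect no substantive obstacle beyond this.
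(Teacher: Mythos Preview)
Your argument is correct. Both inequalities are handled properly: the upper bound via $\log(1-u)\le -u$ is immediate, and for the lower bound your substitution $t=\gamma^{-x}$ reduces the claim to the monotonicity of $h(t)=\dfrac{-\log(1-\kappa t)}{t}$ on $(0,1/\kappa)$, which the power-series expansion establishes cleanly. The sign and domain checks are fine; in particular $\kappa\le 1$ indeed guarantees $\gamma^{-1}<1/\kappa$ so that $h(\gamma^{-1})$ is well-defined.

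The paper's proof packages both bounds into a single monotonicity statement: it sets $H(x)=-\dfrac{\kappa}{\gamma^{x}\log(1-\kappa\gamma^{-x})}$, shows $H$ is increasing on $[1,\infty)$, and reads off the two bounds from $H(1)$ and $\lim_{x\to\infty}H(x)=1$. Since $H(x)=\kappa/h(\gamma^{-x})$, this is exactly your monotonicity claim after reparametrisation, so the lower bound is the same idea. The differences are: (i) you obtain the upper bound by the one-line inequality $\log(1-u)\le -u$ rather than via the limit of $H$, which is slightly more direct; and (ii) to prove monotonicity the paper invokes the less standard estimate $\log(1-z)\ge -z/\sqrt{1-z}$, whereas your power-series argument (or the equivalent computation $h'(t)=t^{-2}\bigl(\tfrac{\kappa t}{1-\kappa t}+\log(1-\kappa t)\bigr)\ge 0$) is more elementary. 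Net effect: same structure, your execution is a bit cleaner.
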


\begin{proof}[Proof of Proposition \ref{prop:basic}] 
    Consider function $h: [1, \infty ) \to \R$ defined as 
    \begin{align*} 
        h (x) = - \frac{\kappa }{ \gamma^x \log(1 - \kappa \gamma^{-x}) } . 
    \end{align*} 
    Clearly, $h(1) = - \frac{\kappa }{ \gamma \log (1 - \frac{\kappa}{\gamma}) } $. By noting $ \log(1 - \kappa \gamma^{-x}) = - \kappa \gamma^{-x} + o (\gamma^{-x})$ when $x$ is large, we know that 
    \begin{align*}
        \lim_{x \to \infty} h (x) = 1. 
    \end{align*} 
    Also, we know that 
    \begin{align*}
        h' (x) = \frac{\kappa \gamma^x \log \gamma}{ (\gamma^x \log (1 - \kappa \gamma^{-x}) )^2 } \( \log \( 1 - \kappa \gamma^{-x} \) + \frac{\kappa \gamma^{-x}}{1 - \kappa \gamma^{-x}} \). 
    \end{align*}
    Since $ \log (1-z) \ge \frac{-z}{\sqrt{1-z}} $ for all $z \in (0,1)$, we know $ h' (x) > 0 $ for all $x \in [1,\infty)$. 
    Thus $ h $ is increasing over $ [1,\infty) $. Therefore $ - \frac{\gamma^{x}}{\gamma \log (1 - \frac{\kappa}{\gamma}) } \le - \frac{1}{\log(1 - \kappa \gamma^{-x})} \le \frac{\gamma^{x}}{\kappa} $ for all $x \in [1,\infty)$. 
\end{proof}

\begin{proof}[Proof of Proposition \ref{prop:gp}] 
    Clearly, the maximum of $ g_p $ is attained at $0$. Thus it suffices to consider cubes $q$ such that $q = [0,r]^d$. 
    For $q = [0,r]^d$, we have $f^{\max}_q = 1 ,f^{\min}_q = 1 - \frac{1}{p} r^p$, $(1-\alpha)f^{\max}_q +\alpha f^{\min}_q = 1 - \frac{1}{p} r^p \alpha$. Let $X_q$ denote the uniform random variable over $q$, and let $X_{q,j}$ ($1 \le j \le d$) be the $j$-th component of $X_q$. Then it holds that 
    \begin{equation*} 
        \begin{aligned} 
        && \mathbb{P} \left(f(X_q) > (1-\alpha)f^{\max}_q +\alpha f^{\min}_q \right) & = \mathbb{P} \left( \| X_q \|_{\infty}^p < r^p \alpha \right) \\
        && & = \prod_{j=1}^d \mathbb{P} \left( | X_{q,j} | < r \alpha^{\frac{1}{p}}  \right) \\
        && & =\alpha^{\frac{d}{p}} . 
        \end{aligned} 
    \end{equation*} 
    Thus $ \mathbb{P} \left(f(X_q) \le (1-\alpha)f^{\max}_q +\alpha f^{\min}_q \right) = 1 - \mathbb{P} \left(f(X_q) > (1-\alpha)f^{\max}_q +\alpha f^{\min}_q \right) = 1 - \alpha^{\frac{d}{p}} $. 
\end{proof} 

\begin{proof}[Proof of Theorem \ref{thm:rs}] 
    It holds that 
    \begin{align} 
        \Pr \( f^* - Y_{T}^{\max} > \alpha L \Theta  \) 
        \le& \;  
        \Pr \( f^{*} - Y_{T}^{\max} > \alpha \( f_{ \X }^{\max} - f_{ \X }^{\min} \) \) \tag{by $L$-Lipschitzness} \nonumber \\ 
        =& \; 
        \Pr \( Y_{T}^{\max} < f_{ \X }^{\max} - \alpha \( f_{ \X }^{\max} - f_{ \X }^{\min} \) \) \nonumber \\ 
        =& \; 
        \prod_{i=1}^T \Pr \( f ( x_{i} ) < f_{ \X }^{\max} - \alpha \( f_{ \X }^{\max} - f_{ \X }^{\min} \) \) \nonumber \\ 
        \le& \;  
        \( 1 - \kappa_s \alpha^{d_s} \)^T , \label{eq:mos}
    \end{align} 
    where the last inequality follows from the definition of scattering dimension and that $\{ x_i\}_{i=1}^n $ are $iid$. 

    By letting $\epsilon = \( 1 - \kappa_s \alpha^{d_s} \)^T  $, we know that $ \Pr \( f^* - Y_{T}^{\max} > L \Theta \( \frac{1 - e^{ \frac{\log \epsilon }{T} }}{ \kappa_s } \)^{ \frac{1}{d_s} } \) \le \epsilon $. 
\end{proof}

\begin{proof}[Proof of Theorem \ref{thm:rs-asymp}]
    Fix an arbitrary sequence $\{ \omega_T \}_T$ with $\lim\limits_{T \to \infty } \omega_T = \infty$ and $\omega_T = o (T)$. 
    For any $T \in \mathbb{N}$, we let $ \alpha_T = \frac{ \( \frac{\omega_T}{T} \)^{\frac{1}{d_s}}  }{L \Theta } $. Then by the same argument leading to (\ref{eq:mos}), we have, for any $\beta > 0$, 
    \begin{align*} 
        \Pr \( \frac{ \( \frac{T}{\omega_T} \)^{\frac{1}{d_s}}  }{L \Theta } \( f^* - Y_{T}^{\max} \) > \beta \) 
        = 
        \Pr \( f^* - Y_{T}^{\max} > \beta L \Theta \( \frac{\omega_T}{T} \)^{\frac{1}{d_s}} \) 
        \le 
        \( 1 - \kappa_s \beta^{d_s} \frac{\omega_T}{T} \)^T . 
    \end{align*} 
    Since 
    \begin{align*}
        \lim_{T \to \infty} \( 1 - \kappa_s \beta^{d_s} \frac{\omega_T}{T} \)^T 
        = 
        \lim_{T \to \infty} e^{ - \kappa_s \beta^{d_s} \omega_T } = 0,  
    \end{align*}
    we have $ \lim\limits_{T \to \infty} \Pr \( \frac{ \( \frac{T}{\omega_T} \)^{\frac{1}{d_s}}  }{L \Theta } \( f^* - Y_{T}^{\max} \) > \beta \) = 0$ for all $\beta > 0$. Thus $ \( \frac{ T }{\omega_T} \)^{\frac{1}{d_s}} \( f^* - Y_{T}^{\max} \) \overset{P}{\to} 0 $. 
    
\end{proof}

\begin{proof}[Proof of Theorem \ref{thm:rs-noisy}]

Let $X$ denote the Borel random variable defined in $ \X $ so that the law of $X$ is $\mu$. Let $Z = b - W$. 
Then we have, for any $\beta > 0$, 
\begin{align*} 
    & \; \Pr \( \frac{ f^* - f ( X ) }{ L \Theta } + Z > \beta \) \\
    =& \;  
    \int_{-\infty}^{\infty} f_Z ( z ) \Pr \( \frac{ f^* - f ( X ) }{L \Theta } > \beta - z \) \,  dz \\ 
    =& \;  
    \int_{-\infty}^{\beta - 1 } f_Z ( z ) \Pr \( \frac{ f^* - f ( X ) }{L \Theta } > \beta - z \) \,  dz + \int_{\beta - 1 }^{\beta} f_Z ( z ) \Pr \( \frac{ f^* - f ( X ) }{L \Theta } > \beta - z \) \,  dz \\
    &+ \int_{\beta }^{\infty} f_Z ( z ) \Pr \( \frac{ f^* - f ( X ) }{L \Theta } > \beta - z \) \,  dz \\ 
    \le& \; 
    \int_{\beta - 1}^{\beta } f_Z ( z ) \( 1 - \kappa_s \( \beta - z \)^{d_s} \) \, dz +  \int_{\beta }^{ \infty } f_Z ( z ) \, dz , 
\end{align*} 
where the last line uses (1) $ \Pr \( \frac{ f^* - f ( X ) }{L \Theta } > \beta - z \) = 0 $ if $ z \le \beta - 1 $, (2) $ \Pr \( \frac{ f^* - f ( X ) }{L \Theta } > \beta - z \) = 1 $ if $ z \ge \beta $. 


Since $W$ is positively supported on $[a,b]$, $Z$ is positively supported on $[0,b-a]$ and there exists $ \kappa_p >0 $ such that $ f_Z (z) \ge \kappa_p $ for all $z \in [0,b-a]$. We continue the above calculation and get, for $\beta \in (0,b-a] $, 
\begin{align} 
    \Pr \( \frac{ f^* - f ( X ) }{L \Theta } + Z > \beta \) 
    \le& \;  
    \Pr \( Z \ge \beta - 1 \) - \kappa_s \int_{ \beta - 1 }^{\beta } f_Z ( z ) \( {\beta - z} \)^{d_s} \, dz \nonumber \\ 
    \le& \; 
    1 - \kappa_s \kappa_p \int_{ [\beta - 1, \beta] \cap [0,b-a] } \( {\beta - z} \)^{d_s} \, dz \nonumber \\ 
    =& \; 
    1 - \kappa_s \kappa_p \int_{ 0 }^{\beta} \( {\beta - z} \)^{d_s} \, dz \nonumber \\ 
    =& \; 
    1 - \frac{ \kappa_s \kappa_p }{ d_s + 1 } \beta^{d_s + 1} . \label{eq:noisy-mos-bound} 
\end{align}

Now consider $ X_1, X_2, \cdots, X_T $ that are $iid$ copies of $X$, and $ Z_1, Z_2, \cdots Z_T $ that are $iid$ copies of $Z$. 
From the above derivation, we have, for sufficiently small $\beta$, 
\begin{align*}
    \Pr \( \( \frac{ T }{\omega_T} \)^{ \frac{1}{ d_s + 1} } \min_{ i: 1\le i \le T } \left\{ \frac{ f^* - f ( X_i ) }{L \Theta } + Z_i \right\} > \beta  \) 
    \le   
    \( 1 - \frac{\kappa_s \kappa_p }{d_s + 1} \beta^{d_s+1} \frac{\omega_T}{T} \)^T . 
\end{align*} 

Taking limits on both sides of the above inequality gives 
\begin{align*}
    \lim_{T \to \infty} \Pr \( \( \frac{ T }{\omega_T} \)^{ \frac{1}{ d_s + 1} }  \min_{ i: 1\le i \le T } \left\{ \frac{ f^* - f ( X_i ) }{L \Theta } + Z_i \right\} > \beta  \) = 0, 
\end{align*}
for any positive $ \beta $ in a small neighborhood of zero. This concludes the proof.

\end{proof}

\begin{proof}[Proof of Lemma \ref{lem:not-eli}]
    Let $q_m^* $ be the cube in $\mathcal{A}_m$ that contains $ x^* $. For $m = 1,2,\cdots$, define the following events: 
    \begin{align*}
        \mathcal{E}_m := \{ \text{There exist a cube $q_m^* \in \mathcal{A}_m$ such that $q_m^* \ni x^*$ } \}.  
    \end{align*} 

    We will show that $ \Pr \( \mathcal{E}_{m+1} | \mathcal{E}_{m} \) \ge 1 - \epsilon $. Before proceeding that, we let $ \mathrm{Diam} (q^*) $ denote the diameter (edge-length) of cube $q^*$, and note that 
    %
    setting $Z = \frac{ 1 - W }{ \mathrm{Diam} (q^*) } $ in Eq. (\ref{eq:noisy-mos-bound}) (in the Appendix) gives that for the $q^*$ that contains $x^*$: 
    \begin{align} 
        \Pr \( \frac{ f^* + 1 - y_{q^*,i} }{ \mathrm{Diam} (q^*) } \ge \beta \) 
        \le& \; 
        1 - \frac{ \kappa_s \kappa_p }{ d_s + 1 } \mathrm{Diam} (q^*) \beta^{d_s + 1} , \label{eq:scattering-ind} 
    \end{align} 
    since $ \frac{\kappa_p}{\mathrm{Diam} (q^*)} $ lower bounds the noise density of $ \frac{ 1 - W }{ \mathrm{Diam} (q^*) } $.   

    Now suppose that $ \mathcal{E}_m $ is true. Then letting $q^* = q_m^*$ in (\ref{eq:scattering-ind}) gives $ \Pr \(  f^* + 1 - y_{q^*,i}  \ge \beta r_m \) \le 1 - \frac{ \kappa_s \kappa_p }{ d_s + 1 } r_m \beta^{d_s + 1} $. By $iid$-ness of $y_{q_m^*, 1}, \cdots, y_{q_m^*, n_m } $ and noticing that $n_m = \frac{\log (1/\epsilon)}{\log \left(1 - \frac{\kappa_s \kappa_p}{d_s+1}r_m \beta^{d_s+1} \right)}$, we know that, with probability exceeding $1 - \epsilon$, for each $m$ 
    \begin{align*} 
        f^* + 1 - Y_{q_m^*, n_m} 
        \le 
        \beta r_m . 
    \end{align*} 
    Setting $\beta = 1$ in the above equation gives that, for each $m$,  
    \begin{align*} 
        f^* + 1 - Y_{q_m^*, n_m}
        \le r_m 
    \end{align*} 
    happens with probability exceeding $1 - \epsilon$. 

    Then from the elimination rule, we know that, with probability exceeding $1 - \epsilon$, 
    \begin{align*} 
        Y_m^{\max} - Y_{q_m^*, n_m} 
        \le 
        Y_{m}^{\max} - (f^* + 1 ) + f^* + 1 - Y_{q_m^*, n_m} 
        \le 
        r_m . 
    \end{align*} 
    This means $ \mathcal{A}_{m}' $ contains the cube $q_m^*$. Thus under the event $ \mathcal{E}_m $, $ \mathcal{E}_{m+1} $ holds true with probability no smaller than $ 1 - \epsilon $. 

    If $x^*$ is contained in some cube in $\mathcal{A}_m$, then it must survive all previous eliminations. Writing this down gives $ \mathcal{E}_{m} \cap \mathcal{E}_{m'} = \mathcal{E}_{m} $ whenever $ m \ge m' $. Therefore, 
    \begin{align*}
        \Pr \( \cap_{m=1}^T \mathcal{E}_m \)
        =& \;  
        \Pr \( \mathcal{E}_T | \cap_{m=1}^{T-1} \mathcal{E}_m \) \Pr \( \mathcal{E}_{T-1} | \cap_{m=1}^{T-2} \mathcal{E}_m \) \cdots \Pr \( \mathcal{E}_2 | \mathcal{E}_1 \) \Pr \( \mathcal{E}_1 \) \\ 
        =& \; 
        \Pr \( \mathcal{E}_T | \mathcal{E}_{T-1} \) \Pr \( \mathcal{E}_{T-1} | \mathcal{E}_{T-2} \) \cdots \Pr \( \mathcal{E}_2 | \mathcal{E}_1 \) \Pr \( \mathcal{E}_1 \)
        \ge 
        (1 - \epsilon )^T . 
    \end{align*} 

    Since the number of batches is clearly bounded by $T$, letting $\epsilon = \frac{1}{T^2}$ finishes the proof. 
    
\end{proof}

\begin{proof}[Proof of Lemma \ref{lem:event}]
    Let 
    $
        \mathcal{E}' := \left\{ \text{$x^*$ is not eliminated during a $T$-step run } \right\} $. 
    By Lemma \ref{lem:not-eli}, we know that $ \Pr \( \mathcal{E}' \) \ge \( 1 - \frac{1}{T^2} \)^T $. Given $\mathcal{E}'$, an argument leading to (\ref{eq:noisy-mos-bound}) gives that 
    \begin{align*} 
        \Pr \( f^* + 1 - Y_{q_m^*, n_m} \le r_m \) \ge 1 - \epsilon . 
    \end{align*} 
    Then conditioning on $\mathcal{E}'$, for any $m$, with probability exceeding $1 - \epsilon$, we have 
    \begin{align*} 
        f^* + 1 - Y_{m}^{\max} 
        \le 
        f^* + 1 - Y_{q_m^*, n_m} + Y_{q_m^*, n_m} - Y_{m}^{\max} 
        \le 
        r_m . 
    \end{align*} 
    Since there are no more than $T$ batches, a union bound and $ \epsilon = \frac{1}{T^2} $ gives that $ \Pr \( \mathcal{E} | \mathcal{E}' \) \ge 1 - \frac{1}{T} $. We conclude the proof by noticing $ \Pr \( \mathcal{E} \) = \Pr \( \mathcal{E} \cap \mathcal{E} '  \) = \Pr \( \mathcal{E} | \mathcal{E}' \) \Pr \( \mathcal{E}' \) \ge \( 1 - \frac{1}{T^2} \)^T \( 1 - \frac{1}{T} \) \ge 1 - \frac{2}{T} $. 
\end{proof}

\begin{proof}[Proof of Lemma \ref{lem:opt-gap}]
    Fix any $x $ that is covered by cubes in $\mathcal{A}_m$. 
    Let $q_m$ be a cube in $\mathcal{A}_m$ such that $x \in q_m$. Let $f_{q_m}^{\max} := \sup_{x \in q_m} f (x)$. Let $q_m^*$ be the cube in $\mathcal{A}_m$ such that $x^* \in q_m^*$. 
    Let $ W_i $ denote $iid$ copies of the noise random variable $W$. We have, for any $m$, 
    \begin{align*} 
        |f(x) - Y_{q_m,n_m}| 
        =& \;  
        \left| f(x) - \max_{i:1\le i \le n_m} \( f(x_{q_m, i}) + W_i \) \right| \\ 
        \le& \;   
        \max_{i:1\le i \le n_m} | f(x) - f (x_{q_m,i}) | + \max_{i:1\le i \le n_m} | W_i | 
        \overset{(i)}{\leq} r_m + 1 , 
    \end{align*} 
    and by Lemma \ref{lem:event}, 
    \begin{align*} 
        f^* + 1 - Y_{m}^{\max} 
        \le 
        f^* + 1 - Y_{q_m^*, n_m} 
        \overset{(ii)}{\le} 
        r_m . 
    \end{align*} 

    Denote by $Par(q_m)$ the cube in $\mathcal{A}_{m-1}$ which contains $q_m$. Since $x \in q_m$, we know $x \in Par(q_m)$. Hence, it follows from the elimination rule that $Y_{m-1}^{\max} - Y_{Par(q_m),n_{m-1}} \leq r_{m-1}$. Therefore, by $(i)$ and $(ii)$, 
    \begin{align*}
        f^* - f(x) 
        \leq Y_{m-1}^{\max} + r_{m-1} - 1 - Y_{Par(q_m),n_{m-1}} + r_{m-1} + 1
        \leq 3 r_{m-1} .
    \end{align*}
        
    
\end{proof}

\section{Additional Note on the Relation between Scattering Dimension and Zooming Dimension}

Conforming to the zooming dimension via standard cubes (commonly known as dyadic cubes), we introduce the following version of scattering dimension via standard cubes. In the measure measure space $ (\mathcal{X}, \| \cdot \|_\infty, \Pr) $, $\mathcal{X} \subseteq \R^d$ is compact, and $\Pr$ is a probability measure over $\mathcal{X}$. Then the scattering dimension of a function $f$ defined over $\mathcal{X}$ is defined as  
\begin{align*} 
    d_s := \inf \{ \tilde{d} \ge 0 :&\; \exists \kappa \in (0,1], \text{ such that } \Pr \left( f ( X_B ) < f_B^{\max} -  \alpha ( f_B^{\max} - f_B^{\min} ) \right) 
    \le 
    1 - \kappa \alpha^{ \tilde{d} }, \\ 
    &\; \forall \text{ dyadic cube } B \subseteq \X \text{ with $x^* \in B$},\;  \forall \alpha \in (0, {1} ] \},
\end{align*} 
and use $B^*_h$ to denote the standard cube with edge length $h$ that contains $x^*$. Then we have the following proposition that relates $d_s$ to $d_z$. 
\begin{proposition}
    Assume that for any $h\leq1$, $B^*_h$ is a $r$-optimal region for some $r$. If the inequality in the definition of $d_z$ is tight, that is, the $r$-optimal region equals to the union of $cr^{-d_z}$ standard cubes with edge length $r$, then we have $d_z+d_s=d$.
\end{proposition}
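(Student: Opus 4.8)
The plan is to compute the scattering probability $\Pr\!\left(f(X_B) < f_B^{\max} - \alpha(f_B^{\max}-f_B^{\min})\right)$ \emph{exactly} for the cubes $B = B^*_h$ and then read off $d_s$. I would use that $\Pr$ restricted to a cube $B$ is normalized volume, i.e.\ $\Pr(X_B \in E) = \mathrm{vol}(E)/\mathrm{vol}(B)$ for Borel $E \subseteq B$ (true for Lebesgue measure, consistent with Assumption~\ref{assumption:measure}), and write $S(r)$ for the $r$-optimal region. Since $f^* = \max_\X f$ is attained at $x^* \in B^*_h$, we have $f_{B^*_h}^{\max}=f^*$.

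First I would pin down the oscillation of $f$ on $B^*_h$. By hypothesis, for every $h \le 1$ there is $r(h)$ with $B^*_h = S(r(h))$; since $S(\cdot)$ is nondecreasing and the tightness hypothesis gives $\mathrm{vol}(S(r)) = c\,r^{\,d-d_z}$, the inclusion $S(r') \subsetneq S(r(h))$ is strict for every $r' < r(h)$ (here $d_z < d$; the degenerate case $d_z = d$ would force all $S(r)$ to have equal volume, which is incompatible with $B^*_h = S(r(h))$ holding for all $h$, so it can be dismissed). Hence every $x \in B^*_h$ satisfies $f^* - f(x) \le r(h)$, while for each $r' < r(h)$ some $x \in B^*_h$ satisfies $f^* - f(x) > r'$; therefore $f_{B^*_h}^{\min} = f^* - r(h)$ and $f_{B^*_h}^{\max} - f_{B^*_h}^{\min} = r(h)$.

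Next I would rewrite the scattering event. For $\alpha \in (0,1]$, the event $\{f(X_{B^*_h}) < f^* - \alpha r(h)\}$ is, up to a null set, the complement within $B^*_h$ of $\{X_{B^*_h} \in S(\alpha r(h))\}$, and $S(\alpha r(h)) \subseteq S(r(h)) = B^*_h$ by monotonicity. Therefore
\[
\Pr\!\left(f(X_{B^*_h}) < f_{B^*_h}^{\max} - \alpha\bigl(f_{B^*_h}^{\max}-f_{B^*_h}^{\min}\bigr)\right) = 1 - \frac{\mathrm{vol}(S(\alpha r(h)))}{\mathrm{vol}(S(r(h)))} = 1 - \alpha^{\,d-d_z},
\]
where the last equality uses the tightness hypothesis: the $c\,r^{-d_z}$ standard cubes of edge length $r$ tile $S(r)$ up to overlaps of measure zero, so $\mathrm{vol}(S(r)) = c\,r^{\,d-d_z}$, and the factor $c$ cancels.

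Finally, since this identity holds uniformly over all admissible cubes $B^*_h$ and all $\alpha \in (0,1]$, the choice $\tilde d = d-d_z$, $\kappa = 1$ satisfies the defining inequality of the scattering dimension, so $d_s \le d - d_z$ and $\kappa_s = 1$; and for any $\tilde d < d - d_z$ the required bound $1 - \alpha^{\,d-d_z} \le 1 - \kappa\alpha^{\tilde d}$ fails as $\alpha \to 0^+$ for every $\kappa > 0$, whence $d_s = d - d_z$, i.e.\ $d_z + d_s = d$. The main obstacle is the second paragraph: both structural hypotheses must be used precisely — the first so that only the cubes $B^*_h$ are ever relevant in the scattering definition and so that the within-cube oscillation equals $r(h)$, and the second so that $\mathrm{vol}(S(r))$ genuinely scales as $c\,r^{\,d-d_z}$; the delicate step is justifying $f_{B^*_h}^{\min} = f^* - r(h)$ via the strict monotonicity of $\mathrm{vol}(S(\cdot))$.
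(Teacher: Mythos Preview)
Your proof is correct and follows essentially the same approach as the paper: compute the scattering probability on $B^*_h$ exactly as $1-\alpha^{d-d_z}$ using $\mathrm{vol}(S(r))=c\,r^{d-d_z}$, then read off $d_s=d-d_z$. You are in fact a bit more careful than the paper in justifying $f_{B^*_h}^{\min}=f^*-r(h)$ via strict monotonicity of $\mathrm{vol}(S(\cdot))$ and in arguing both directions of the equality $d_s=d-d_z$, but the route is the same.
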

\begin{proof}
    With our assumptions in place, we will calculate the scattering dimension from the zooming dimension. The standard cube $B^*_h$ has volume $m(B^*_h)=h^d$. Since it is an $r$-optimal region, we have
    \begin{align*}
        m(B^*_h)=ar^{-d_z}\cdot r^d=cr^{d-d_z}.
    \end{align*}
    Then we have 
    \begin{align*}
        h^d=m(B^*_h)=cr^{d-d_z},
    \end{align*}
    which yields that $r=c^{-\frac{1}{d-d_z}}\cdot h^\frac{d}{d-d_z}$. Therefore, we have $f_{B^*_h}^{\min}=1-c^{-\frac{1}{d-d_z}}\cdot h^\frac{d}{d-d_z}$. Substituting $f^{\max}_{B^*_h}=1$ gives that 
    \begin{align*}
        f_{B^*_h}^{\max} -  \alpha ( f_{B^*_h}^{\max} - f_{B^*_h}^{\min} )=1-\alpha c^{-\frac{1}{d-d_z}}\cdot h^\frac{d}{d-d_z},
    \end{align*}
    and
    \begin{align*}
        \Pr\left(f(X_{B^*_h})<f_{B^*_h}^{\max} -  \alpha ( f_{B^*_h}^{\max} - f_{B^*_h}^{\min} )\right)
        =1-\frac{m\left(\left\{x\in B^*_h:\;f(x)\geq1-\alpha c^{-\frac{1}{d-d_z}}\cdot h^\frac{d}{d-d_z}\right\}\right)}{h^d}
    \end{align*}
    Since $B^*_h$ equals to $S(c^{-\frac{1}{d-d_z}}\cdot h^\frac{d}{d-d_z})$, we have
    \begin{align*}
        \left\{x\in B^*_h:\;f(x)\geq1-\alpha c^{-\frac{1}{d-d_z}}\cdot h^\frac{d}{d-d_z}\right\}=S\left(\alpha c^{-\frac{1}{d-d_z}}\cdot h^\frac{d}{d-d_z}\right),
    \end{align*}
    and thus
    \begin{align*}
        m\left(\left\{x\in B^*_h:\;f(x)\geq1-\alpha c^{-\frac{1}{d-d_z}}\cdot h^\frac{d}{d-d_z}\right\}\right)=c\cdot\left(\alpha c^{-\frac{1}{d-d_z}}\cdot h^\frac{d}{d-d_z}\right)^{d-d_z}=\alpha^{d-d_z}\cdot h^d.
    \end{align*}
    Consequently, we have
    \begin{align*}
        \Pr\left(f(X_{B^*_h})<f_{B^*_h}^{\max} -  \alpha ( f_{B^*_h}^{\max} - f_{B^*_h}^{\min} )\right)=1-\alpha^{d-d_z}
    \end{align*}
    and $d_s=d-d_z$.
\end{proof} 

\end{document}